\def\input@path{{styles/}}
\theoremstyle{plain}%
\newtheorem{theorem}{Theorem}[section]
\newtheorem{lemma}[theorem]{Lemma}
\newtheorem{observation}[theorem]{Observation}
\theoremstyle{plain}%
\newtheorem*{remark:unnumbered}[theorem]{Remark}%
\newtheorem{defn}[theorem]{Definition}
\theoremstyle{nonumberplain}%
\newtheorem{proof}{Proof:}%
\providecommand{\emphic}[2]{}
\providecommand{\emphi}[1]{}%
\providecommand{\emphw}[1]{}%
\providecommand{\emphOnly}[1]{}%
   \definecolor{blue25emph}{rgb}{0, 0, 11}
   \definecolor{almostblack}{rgb}{0, 0, 0.3}
   \renewcommand{\emphic}[2]{\textcolor{blue25emph}{%
         \textbf{\emph{#1}}}\index{#2}}
   \renewcommand{\emphi}[1]{\emphic{#1}{#1}}
   \renewcommand{\emphw}[1]{{\textcolor{almostblack}{\emph{#1}}}}%
   \renewcommand{\emphOnly}[1]{\emph{\textcolor{blue25}{\textbf{#1}}}}
\newcommand{\myqedsymbol}{\rule{2mm}{2mm}}
\newcommand{\SarielThanks}[1]{%
   \thanks{%
      Department of Computer Science; %
      University of Illinois; %
      201 N. Goodwin Avenue; %
      Urbana, IL, 61801, USA; %
      \href{mailto:spam@illinois.edu}{sariel@illinois.edu}; %
      \url{http://sarielhp.org/}. %
   #1%
   }%
}
\newcommand{\StavThanks}[1]{%
   \thanks{%
      Department of Computer Science; %
      University of Illinois; %
      201 N. Goodwin Avenue; %
      Urbana, IL, 61801, USA; %
      \href{mailto:spam@illinois.edu}{stava2@illinois.edu}; %
      \url{https://publish.illinois.edu/stav-ashur/}. %
   #1%
   }%
}
\newcommand{\NancyThanks}[1]{%
   \thanks{%
      Department of Computer Science; %
      University of Illinois; %
      201 N. Goodwin Avenue; %
      Urbana, IL, 61801, USA; %
      \href{mailto:spam@illinois.edu}{stava2@illinois.edu}; %
      \url{https://parasollab.web.illinois.edu/people/amato/}. %
   #1%
   }%
}
\newcommand{\HLink}[2]{\hyperref[#2]{#1~\ref*{#2}}}
\newcommand{\HLinkSuffix}[3]{\hyperref[#2]{#1\ref*{#2}{#3}}}
\newcommand{\figlab}[1]{\label{fig:#1}}
\newcommand{\figref}[1]{\HLink{Figure}{fig:#1}}
\providecommand{\deflab}[1]{\label{def:#1}}
\newcommand{\defref}[1]{\HLink{Definition}{def:#1}}
\newcommand{\defrefY}[2]{\hyperref[def:#2]{#1}}
\newcommand{\seclab}[1]{\label{sec:#1}}
\newcommand{\secref}[1]{\HLink{Section}{sec:#1}}
\newcommand{\itemlab}[1]{\label{item:#1}}
\newcommand{\lemlab}[1]{\label{lemma:#1}}
\newcommand{\lemref}[1]{\HLink{Lemma}{lemma:#1}}%
\providecommand{\eqlab}[1]{}%
\renewcommand{\eqlab}[1]{\label{equation:#1}}
\newcommand{\Eqref}[1]{\HLinkSuffix{Eq.~(}{equation:#1}{)}}
\providecommand{\remove}[1]{}%
\newcommand{\Set}[2]{\left\{ #1 \;\middle\vert\; #2 \right\}}
\newcommand{\pth}[1]{\mleft(#1\mright)}%
\newcommand{\ProbC}{{\mathbb{P}}}
\newcommand{\ExC}{{\mathbb{E}}}
\newcommand{\ExCond}[2]{\ExC\!\left[%
       #1 \;\middle\vert\; #2 \right]}
\newcommand{\Prob}[1]{\ProbC\mleft[ #1 \mright]}
\newcommand{\ProbCond}[2]{\mathop{\ProbC}\!\left[%
       #1 \;\middle\vert\; #2 \right]}
\newcommand{\Ex}[1]{\ExC\mleft[ #1 \mright]}
\newcommand{\ceil}[1]{\mleft\lceil {#1} \mright\rceil}
\newcommand{\floor}[1]{\mleft\lfloor {#1} \mright\rfloor}
\renewcommand{\th}{th\xspace}
\newcommand{\profileX}[1]{\ensuremath{\mathrm{profile}\pth{#1}}}
\newlist{compactenumA}{enumerate}{5}%
\setlist[compactenumA]{topsep=0pt,itemsep=-1ex,partopsep=1ex,parsep=1ex,%
   label=(\Alph*)}%
\newlist{compactenuma}{enumerate}{5}%
\setlist[compactenuma]{topsep=0pt,itemsep=-1ex,partopsep=1ex,parsep=1ex,%
   label=(\alph*)}%
\newlist{compactenumI}{enumerate}{5}%
\setlist[compactenumI]{topsep=0pt,itemsep=-1ex,partopsep=1ex,parsep=1ex,%
   label=(\Roman*)}%
\newlist{compactenumi}{enumerate}{5}%
\setlist[compactenumi]{topsep=0pt,itemsep=-1ex,partopsep=1ex,parsep=1ex,%
   label=(\roman*)}%
\newlist{compactitem}{itemize}{5}%
\setlist[compactitem]{topsep=0pt,itemsep=-1ex,partopsep=1ex,parsep=1ex,%
   label=\ensuremath{\bullet}}%
\newcommand{\xbeginlgox}{\begin{minipage}{1in}\begin{tabbing}
           \quad\=\qquad\=\qquad\=\qquad\=\qquad\=\qquad\=\qquad\=\kill}
        \newcommand{\xendlgox}{\end{tabbing}\end{minipage}}
\newenvironment{program}{
   \begin{minipage}{4.0in}
   \begin{tabbing}
       \ \ \ \ \= \ \ \ \= \ \ \ \ \= \ \ \ \ \= \ \ \ \ \=
      \ \ \ \ \= \ \ \ \ \= \ \ \ \ \= \ \ \ \ \=
      \ \ \ \ \= \ \ \ \ \= \ \ \ \ \= \ \ \ \ \= \kill
}{
   \end{tabbing}
   \end{minipage}
}
\numberwithin{figure}{section}%
\numberwithin{table}{section}%
\numberwithin{equation}{section}%
\DeclareFontFamily{U}{BOONDOX-calo}{\skewchar\font=45 }
\DeclareFontShape{U}{BOONDOX-calo}{m}{n}{
  <-> s*[1.05] BOONDOX-r-calo}{}
\DeclareFontShape{U}{BOONDOX-calo}{b}{n}{
  <-> s*[1.05] BOONDOX-b-calo}{}
\DeclareMathAlphabet{\mathcalb}{U}{BOONDOX-calo}{m}{n}
\SetMathAlphabet{\mathcalb}{bold}{U}{BOONDOX-calo}{b}{n}
\DeclareMathAlphabet{\mathbcalb}{U}{BOONDOX-calo}{b}{n}
\newcommand*{\T}{\mathcal{T}}
\newcommand{\Opt}{\mathcal{O}}
\newcommand{\RExt}{\mathbb{R}^{+\infty}}%
\newcommand{\Distrib}{\mathcal{D}}
\newcommand{\alg}{\texttt{alg}\xspace}
\newcommand{\hprofileInner}{\mathcalb{h}}
\newcommand{\hprofileX}[1]{\hprofileInner\pth{#1}}%
\newcommand{\medianC}{\mathcalb{m}}%
\newcommand{\medianY}[2]{\medianC_{#1}\mleft[  #2 \mright]}%
\newcommand{\medianX}[1]{\medianC\mleft[ #1 \mright]}%
\newcommand{\MedCond}[2]{\medianC\!\left[%
       #1 \;\middle\vert\; #2 \right]}
\newcommand{\pr}{\mathcalb{p}}
\newcommand{\threads}{\ensuremath{\mathcalb{p}}\xspace}
\newcommand{\Threads}{\ensuremath{\mathcalb{P}}\xspace}
\newcommand{\etal}{\textit{et~al.}\xspace}
\newcommand{\Term}[1]{\textsf{#1}}
\newcommand{\TTL}{\textsf{TTL}\xspace}
\newcommand{\DOF}{\textsf{DOF}\xspace}
\newcommand{\PPL}{\textsf{PPL}\xspace}
\newcommand{\SingleRef}{\hyperref[item:single]{\Single}\xspace}
\newcommand{\ValisRef}{\hyperref[item:valis]{\Valis}\xspace}
\newcommand{\ClusterRef}{\hyperref[item:cluster]{\Cluster}\xspace}
\newcommand{\proxy}{\mathcalb{t}}%
\newcommand{\EWX}[1]{\mathcalb{w}_{#1}}%
\newcommand{\fTTLX}[1]{\mathcal{T}_{#1}}
\newcommand{\bitsX}[1]{\mathcalb{b}\pth{#1}}
\newcommand{\BIN}{\ensuremath{\textbf{BIN}}\xspace}
\newcommand{\Good}{\mathcal{G}}%
\newcommand{\Single}{\texttt{L1}\xspace}
\newcommand{\Valis}{\texttt{L12}\xspace}
\newcommand{\Cluster}{\texttt{L96}\xspace}
\newcommand{\Manipulator}{\textsc{Manipulator}\xspace}
\newcommand{\ManipulatorRef}{\hyperref[item:manipulator]{\Manipulator}\xspace}
\newcommand{\BugTrap}{\textsc{Bug Trap}\xspace}
\newcommand{\BugTrapRef}{\hyperref[item:bug:trap]{\BugTrap}\xspace}
\newcommand{\Maze}{\textsc{Maze}\xspace}
\newcommand{\MazeRef}{\hyperref[item:maze]{\Maze}\xspace}
\newcommand{\LongDetour}{\textsc{Long Detour}\xspace}
\newcommand{\LongDetourpRef}{\hyperref[item:long:detour]{\LongDetour}\xspace}
\newcommand{\SimplePassage}{\textsc{Simple Passage}\xspace}
\newcommand{\SimplePassageRef}{\hyperref[item:simple:passage]{\SimplePassage}\xspace}
\newcommand{\FailX}[1]{\textcolor{red}{\textbf{#1}}}
\newcommand{\PRM}{\Term{PRM}\xspace}
\newcommand{\SBMP}{\Term{SBMP}\xspace}
\newcommand{\RRT}{\Term{RRT}\xspace}
\newcommand{\Seq}{\mathcal{T}}
\newcommand{\si}[1]{}
\newcommand{\myparagraph}[1]{%
   \bigskip%
   \noindent%
   \textbf{#1} %
}
\newcommand{\obslab}[1]{\label{observation:#1}}
\newcommand{\tblref}[1]{\HLink{Table}{table:#1}}
\newcommand{\obsref}[1]{\HLink{Observation}{observation:#1}}
\newcommand{\tbllab}[1]{\label{table:#1}}
\newcommand{\urlBugTrap}{\url{https://parasollab.web.illinois.edu/resources/mpbenchmarks/BugTrap/}}
\newcommand{\DOFX}[1]{$#1$\DOF}
\begin{document}
\title{Faster Sampling-Based Motion Planning via Restarts}

   \author{%
      Nancy M. Amato%
      \NancyThanks{}
      \and%
      Stav Ashur%
      \StavThanks{}
      \and%
      Sariel Har-Peled%
      \SarielThanks{Work on this paper was partially supported by NSF
         AF award CCF-2317241.  }%
   }%

   \date{\today}%

\maketitle

\begin{abstract}
    Randomized methods such as \PRM and \RRT are widely used in motion planning. However, in some cases, their running time suffers from inherent instability, leading to ``catastrophic'' performance even for relatively simple instances.  We propose stochastic restart techniques, some of them new, for speeding up randomized algorithms, that provide dramatic speedups in practice, a factor of $3$ (or more in many cases).

    Our experiments demonstrate that the new algorithms have faster runtimes, shorter paths, and greater gains from multi-threading (when compared with a straightforward parallel implementation).  We prove the optimality of the new variants. Our implementation is open source, available on GitHub \cite{ah-csula-25-real}, and is easy to deploy and use.
\end{abstract}

\section{Introduction}
\seclab{intro}

Sampling-based motion planning (\SBMP) is a family of randomized algorithms used to solve the robot motion planning problem \cite{kslo-prpp-96,l-rrtnt-98}. Their probabilistic nature enables these methods to solve many instances of the problem despite its intractability \cite{r-cmpg-79}, making them widely used in robotics.  Many \SBMP algorithms, including \RRT and \PRM, are \emphw{probabilistically complete} under reasonable assumptions \cite{kslbh-pcrgk-19, kslbh-cpcrg-23,ock-smpcr-24}, implying they are Las Vegas algorithms, that is, randomized algorithms that never fail but whose running time is a random variable.

The probabilistic nature of these algorithms causes instability in the running time, as ``incorrect'' early decisions might lead to much longer exploration until a solution is found, see \figref{non:monotone}. This instability may manifest as motion planning problems with \emphi{heavy-tailed} runtime distributions, meaning, intuitively, that the expected runtime increases significantly due to these high values.

\begin{figure}[h!]
    \centering \subfloat[]{%
       \includegraphics[page=1]{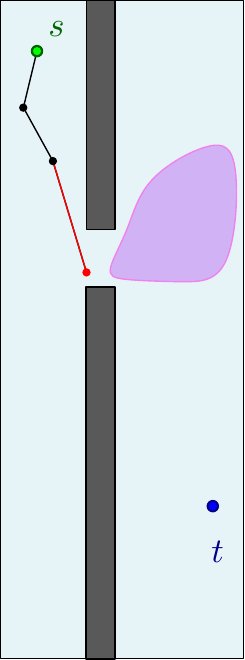}%
       \figlab{first_case} } \hfil
    \subfloat[]{\includegraphics[page=2]{figs/non_monotone}%
    } \hfil \subfloat[]{\includegraphics[page=3]{figs/non_monotone}%
    }

    \caption{An illustration of two possible \RRT search trees for a 2D point robot motion planning problem with a new extension in red. In (a), after adding the extension, the algorithm must sample the purple region and then the orange region to find the passage. The extension in (b) results in a single larger region, which will lead to the same result. In (c), we can see a possible result of an \RRT run in which the first extension was chosen.}
    \figlab{non:monotone}
\end{figure}

This problem of Las Vegas with heavy-tailed runtime distributions is not unique to robot motion planning, and tools for addressing it have been developed and successfully used in several fields.

\myparagraph{An example.} %
Consider an algorithm \alg with its running time $X$, such that $\Prob{X =1} = 0.01$ (i.e., \alg terminates after one second with probability $1/100$. Otherwise, it runs forever. (In practice, ``forever'' corresponds to cases where the algorithm just takes longer than what is acceptable, e.g., minutes instead of seconds.) The optimal strategy here is obvious -- run \alg for one second, if it terminates, voil\'{a}, a successful run was found. Otherwise, terminate it and start a fresh run, with the same threshold. The running time of this strategy has a geometric distribution $\mathrm{Geom}(1/100)$, with expected running time of $100$ seconds. A dramatic improvement over the original algorithm, which most likely would have run forever.

The goal is to strategically choose restart times for the algorithm \alg such that the strategy has a small (expected) running time, and has a ``lighter'' tail (i.e., one can prove a concentration bound on the running time until a successful run of the algorithm).  Of course, in most cases, the distribution $\Distrib$ of the running time of $\alg$ is unknown. The challenge is thus to perform efficient \emph{simulation}, that is, a restarting strategy of \alg without knowing $\Distrib$.  This problem was studied by Luby \etal \cite{lsz-oslva-93}, who presented an elegant strategy that runs in (expected) $O( \Opt \log \Opt)$ time, where $\Opt$ is the expected running time of the optimal strategy for the given algorithm. They also prove that this bound is worst-case tight.

\subsection{Contribution}

In this paper, we present several strategies for preventing catastrophic runtimes in motion planning problems. We describe the \emphi{time to live} (\TTL) simulation framework that all of the strategies share, the well-known strategy by Luby \etal \cite{lsz-oslva-93} (see \secref{counter:search:strategy}), and several new strategies for which we provide theoretical optimality analysis (see \secref{new:strategies}).  We consider several different simulation models, starting with a simple ``stop and restart'' model, and more involved schemes where the strategy creates several copies of the original algorithm and runs them in parallel, or dovetails their execution, by pausing and resuming their execution, and some schemes in between.

We demonstrate the benefits of using the different strategies with an extensive set of experiments on motion planning benchmarks (see \secref{experiments}). %
The experiments show drastically faster mean and median runtimes on tasks where \RRT has an ``unstable'' runtime distribution, i.e., a distribution with a non-negligible probability of succeeding fast, coupled with a mean and median runtime heavily influenced by much longer runtimes. We also include the results of an experiment conducted on a task where \RRT has a ``stable'' runtime distribution, meaning low variance, and verify that there is no advantage from using these strategies under such conditions.

Furthermore, we show that the strategies also greatly benefit from parallelism, and, in certain tasks, also find shorter paths than those found by plain \RRT.  In particular, in some cases, there is a factor six increase in the success rate of completing the motion planning task (on a single thread), see \tblref{shelf} (a). The effects are even more dramatic, in some cases, where the standard parallel-OR implementation, running multiple independent copies simultaneously, completely fails to complete the task, while the new strategies succeed in the majority of runs, see \tblref{bug:trap} (a).  Even in cases where the task is easier, and the parallel-OR completes all runs, the strategies exhibit a four to eight speedup, see \tblref{simple:passage}.

Our open source code implementation \cite{ah-csula-25-real} applies the presented strategies to algorithms provided via an executable.

\myparagraph{Outline.} %
The paper is organized as follows.  In \secref{related:work} we review related work on stochastic resetting and \SBMP algorithms.  In \secref{preliminaries} we discuss important preliminaries and introduce necessary notations.  In \secref{strategies} we describe the different strategies. We first explain the counter search strategy and the intuition behind it in \secref{counter:search:strategy}, and then introduce our new strategies in \secref{new:strategies}.  In \secref{experiments} we present the experiments conducted and their results. In \secref{applying:in:sbmp} we provide guidance for applying restarting strategies in \SBMP algorithms. Finally, we conclude in \secref{conclusions} with a discussion of our results.

\subsection{Related work}
\seclab{related:work}

\subsubsection{Stochastic resetting}

Alt \etal \cite{agmkw-moras-96} were the first to show how \TTL strategies can be used to transform heavy-tailed Las Vegas algorithms to randomized algorithms with a small probability of a catastrophe. Inspired by that work, which was published earlier, despite the misleading publication dates, Luby \etal \cite{lsz-oslva-93} (mentioned above) gave provably optimal strategies for speeding up Las Vegas algorithms. They showed that if the distribution is given, a fixed \TTL is optimal, and described an optimal strategy using a sequence of \TTL{}s for unknown distributions.

Luby and Ertel \cite{le-oplva-94} studied the parallel version of Luby \etal \cite{lsz-oslva-93}. They pointed out that computing the optimal strategy in this case seems to be difficult, but proved that under certain assumptions, a fixed threshold strategy is still optimal. Furthermore, they show that running the strategy of Luby \etal~on each CPU/thread yields a theoretically optimal threshold.

The simulation techniques of Luby \etal are used in solvers for satisfiability \cite{gkss-c2ss-08}, and constraint programming \cite{b-bsa-06}.  They are also used in more general search problems, such as path planning in graphs \cite{maq-acopp-23}. More widely, such strategies are related to stochastic resetting \cite{ems-sra-20}, first passage under restart \cite{pr-fpur-17}, and diffusion with resetting \cite{tpsrr-erdsr-20}. See \cite{kr-psra-24} for a more comprehensive review of this method.

Wedge and Branicky \cite{wb-htrrr-08} analyzed \RRT on a set of benchmarks, empirically determined the optimal \TTL for the tested instances, and showed an improvement in runtimes when using a good threshold. They did not suggest any strategy for the case where nothing is known about the runtime distribution of the problem.

\subsubsection{Parallel-OR \SBMP}
\seclab{parallel:or}

A widely used strategy for alleviating the possibly heavy-tailed runtime distributions of \SBMP algorithms is the parallel-OR model. First used in \SBMP by Challou \etal \cite{cgk-psarm-93}, this paradigm simply runs multiple copies of an algorithm using multithreading, and terminates as soon as the first thread succeeds. Challou \etal have shown speed-up super-linear in the number of processors, and described the mathematical reasoning of the paradigm, which was later named ``or parallel effect'' \cite{dsc-prlda-13}. Carpin and Pagello \cite{cp-prmrs-02} introduced parallel-OR \RRT{}, which was shown to have an advantage in different scenarios by Aguinaga \etal \cite{abm-prpps-08} and Devaurs \etal \cite{dsc-prlda-13}. Otte and Correll \cite{oc-cpspp-13} introduced C-FOREST, which builds upon parallel-OR \RRT and achieved super-linear speed-up for shortest path problems. Sun \etal \cite{spa-hruuu-15} have used parallel-OR \RRT to quickly replan under uncertainty and achieve shorter paths and better probability of success.

Despite the demonstrated power of the parallel-OR model, there are intuitive reasons to doubt its effectiveness in high-variance motion planning problems. Recall the example described in \secref{intro}: even $32$ or $64$ threads are not enough for a practical solution. Specifically, $64$ independent attempts at a problem instance with probability $1/100$ of ``success'', assuming the alternate outcome is unacceptable, do not amount to a reliable algorithm (i.e., in this case, the probability of failure of all $64$ runs is $\approx 52\%$).

\section{Preliminaries}
\seclab{preliminaries}

This section starts with several basic definitions, and continues by stating and proving several important claims and defining the \emph{profile} of a Las Vegas algorithm. These are required to follow the intuition behind the optimal strategy of Luby \etal, referred to as \emph{counter search} for reasons that will become apparent later, described in \secref{counter:search:strategy}, and the optimality proofs of the new strategies in \secref{new:strategies}.

\subsection{Settings and basic definitions.}
\seclab{settings:and:definitions}
\myparagraph{Time scale.}
For concreteness, we use seconds as our ``atomic'' time units, but the discussion implies interchangeability with other units of time.  In particular, we consider one second as the minimal running time of an algorithm.

Let $\RExt$ denote the set of all real non-negative numbers, including $+\infty$.  Let \alg denote the given randomized algorithm, and assume its running time is distributed according to an (unknown) distribution $\Distrib$ over $\RExt$. Given a threshold $\alpha$, and a random variable $X \sim \Distrib$, the \emphi{$\alpha$-median} of $\Distrib$, denoted by $\medianY{\alpha}{X}= \medianY{\alpha}{\Distrib}$, is the minimum (formally, infimum) of $t$ such that
\begin{math}
    \Prob{X < t} \leq \alpha \text{ and } \Prob{X > t} \leq 1-\alpha.
\end{math}
The \emph{median} of $\Distrib$, denoted by $\medianX{\Distrib}$, is the $1/2$-median of $\Distrib$.

\begin{defn}
    In the \emphi{\TTL strategy}, we are given a prespecified sequence (potentially implicit and infinite) of \TTL{}s $\Seq = t_1, t_2, t_3 \ldots$, and run \alg using these \TTL{}s one after the other. Once the running time of an execution of \alg exceeds the provided \TTL, this copy of \alg is killed, and a new copy is started using the next \TTL in the sequence.
\end{defn}

\begin{defn}
    In the \emphi{Fixed \TTL strategy}, we are given a fixed threshold $\Delta$, and run the \TTL strategy with sequence $\Seq = \Delta, \Delta, \Delta \ldots$.
\end{defn}

\begin{observation}
    \obslab{fixed:ttl:optimal}%
    The optimal strategy is always a fixed-\TTL simulation.
\end{observation}

This is due to the memorylessness of the simulation process -- the failure of the first round (which is independent of later rounds), does not change the optimal strategy to be used in the remaining rounds\footnote{This is where we use the full information assumption -- we gain no new information from knowing that the first run had failed.}. Thus, whatever the threshold being used in the second round should also be used in the first round, and vice versa. Thus, all rounds should use the same threshold.  (For a formal proof of this, see Luby \etal \cite{lsz-oslva-93}).

\subsection{Algorithms and their profile}

\begin{lemma}[Full information settings
\cite{lsz-oslva-93}]
    \lemlab{full:k}%
    Let \alg be a randomized algorithm with running time $X \sim \Distrib$ over $\RExt$. The expected running time of \alg using \TTL simulation with threshold $t$ is
    \begin{align*}
      &f(t)
        =%
        \ExCond{X}{X \leq t} + \frac{\Prob{X > t }}{\Prob{X\leq t}} t
        \leq%
        R(t)&
      \\
      &\qquad\qquad
        \text{where }%
        R(t) =
        \frac{t}{\Prob{X\leq t}}.&
    \end{align*}

    The \emphi{optimal threshold} of \alg is
    \begin{math}
        \Delta = \fTTLX{\alg} = \arg\min_{t \geq 0} f(t).
    \end{math}
    The optimal simulation of \alg, minimizing the expected running time, is the \TTL simulation of \alg using threshold $\Delta$, and its expected running time is $\Opt = f(\Delta)$.
\end{lemma}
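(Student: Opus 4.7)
The plan is to decompose the TTL simulation with threshold $t$ into i.i.d.\ rounds and compute the expected total running time by linearity. Let $p = \Prob{X \leq t}$ and note that in each round an independent copy of \alg is launched and runs for $\min(X,t)$ seconds; the round is ``successful'' iff $X \leq t$, which happens with probability $p$ independently of earlier rounds. Thus the number of failed rounds preceding the first successful one is geometric with parameter $p$, and so has expectation $(1-p)/p = \Prob{X > t}/\Prob{X \leq t}$. Every failed round contributes exactly $t$ seconds of wall-clock time to the simulation, while the final (successful) round contributes a random time distributed as $X$ conditioned on $X \leq t$, whose expectation is $\ExCond{X}{X \leq t}$. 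Summing the two contributions by linearity yields the claimed identity
\[
    f(t) \;=\; \ExCond{X}{X \leq t} \;+\; \frac{\Prob{X > t}}{\Prob{X \leq t}}\, t.
\]

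For the bound $f(t) \leq R(t)$, I would simply use the trivial inequality $\ExCond{X}{X\leq t} \leq t$ to get
\[
    f(t) \;\leq\; t + \frac{\Prob{X>t}}{\Prob{X\leq t}}\, t \;=\; \frac{t\bigl(\Prob{X \leq t} + \Prob{X > t}\bigr)}{\Prob{X \leq t}} \;=\; \frac{t}{\Prob{X \leq t}} \;=\; R(t).
\]

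The optimality statement is then immediate from \obsref{fixed:ttl:optimal}: because the optimal strategy is guaranteed to be of fixed-\TTL form, and the expected running time of the fixed-\TTL strategy with threshold $t$ is precisely $f(t)$, the best choice is $\Delta = \arg\min_{t \geq 0} f(t)$, yielding optimal expected running time $\Opt = f(\Delta)$.

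The only subtle point I anticipate is the possibility that $\Prob{X \leq t} = 0$, in which case both $f(t)$ and $R(t)$ are $+\infty$ and the claims hold trivially in $\RExt$; otherwise the simulation terminates almost surely and all conditional expectations are well-defined, so linearity applies without issue. Aside from that degenerate case the argument is essentially a geometric-series computation, and the main conceptual content is really the appeal to \obsref{fixed:ttl:optimal} for why restricting attention to a single threshold is without loss of generality.
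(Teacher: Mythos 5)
Your argument is correct: the round-by-round decomposition (geometric number of failed rounds, each costing $t$, plus the conditional cost of the successful round), the trivial bound $\ExCond{X}{X\leq t}\leq t$ giving $f(t)\leq R(t)$, and the appeal to \obsref{fixed:ttl:optimal} for reducing to fixed-\TTL strategies are exactly the standard route. The paper itself states \lemref{full:k} as a cited result (Luby \etal and the earlier version of this work) without reproducing a proof, and it likewise defers the formal justification of \obsref{fixed:ttl:optimal} to Luby \etal, so your write-up fills in precisely the computation the paper leaves implicit and is consistent with how $f$ and $R$ are used later (e.g.\ in \lemref{equiv}); your handling of the degenerate case $\Prob{X\leq t}=0$ is a reasonable extra care.
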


If $\alg$ has optimal threshold $\Delta = \fTTLX{\alg}$ , and $\beta =\Prob{ X\leq \Delta}$. The optimal strategy would rerun $\alg$ (in expectation) $1/\beta$ times, and overall would have expected running time bounded by $\Delta / \beta$. Since the functions $f(t)$ and $R(t)$ of \lemref{full:k} are not the same, we need to be more careful.
\begin{defn}[Profile of an algorithm]
    \deflab{profile}%
    For the algorithm \alg, let $X$ be its running time.  The \emphi{proxy} running time of \alg with threshold $\alpha$ is the quantity $R(\alpha) = \alpha/ \Prob{X \leq \alpha}$. Let $\proxy = \arg \min_{\alpha > 0 } R(\alpha)$ be the optimal choice for the proxy running time.  The \emphi{profile} of $\alg$, is the point
    \begin{equation}
        \profileX{\alg}
        =
        (1/\pr, \proxy), \qquad\text{where}\qquad
        \pr = \Prob{X \leq \proxy}.
        \eqlab{profile}
    \end{equation}
    The total expected \emphi{work} associated with $\alg$ is $\EWX{\alg} = R(\proxy) = \proxy / \pr$.
\end{defn}

We show below that, up to a constant, the optimal strategy for \alg with profile $(1/\pr, \proxy)$ is to run the \TTL simulation with threshold $\proxy$. In expectation, this requires running \alg $1/\pr$ times, each for $\proxy$ time. This allows us later on to more easily prove the optimality of new strategies. The proof is mathematically neat, but the reader might benefit from skipping it on a first read.

\begin{lemma}
    \lemlab{equiv}%
    If \alg has profile $(1/\pr, \proxy)$, then the optimal strategy for \alg takes $\Theta(R(\proxy)) = \Theta( \proxy/\pr)$ time in expectation.
\end{lemma}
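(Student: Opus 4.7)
The plan is to sandwich $\Opt = f(\Delta)$ between two constant multiples of $R(\proxy) = \proxy/\pr$, which immediately yields $\Opt = \Theta(R(\proxy)) = \Theta(\proxy/\pr)$. The upper bound is essentially immediate: since $\Delta$ minimizes $f$, I have $\Opt = f(\Delta) \leq f(\proxy)$, and \lemref{full:k} supplies $f(\proxy) \leq R(\proxy)$. The real work is the matching lower bound $\Opt \geq R(\proxy)/2$.

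To prove this, I will establish the stronger uniform statement that $f(t) \geq R(\proxy)/2$ for \emph{every} threshold $t > 0$. The key ingredient is the tail bound
\[
\Prob{X \leq s} \;\leq\; s/R(\proxy) \qquad \text{for all } s > 0,
\]
which is a direct restatement of the minimality of $R$ at $\proxy$: since $R(s) = s/\Prob{X \leq s} \geq R(\proxy)$, it rearranges to this. Next I will recast $f(t)$ in the clean form $f(t) = \Ex{\min(X,t)}/\Prob{X \leq t}$, which follows from a short manipulation of the formula in \lemref{full:k}, or equivalently by identifying $\Ex{\min(X, t)}$ as the expected time per attempt and $1/\Prob{X \leq t}$ as the expected number of attempts. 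A layer-cake identity then gives $\Ex{\min(X,t)} = \int_0^t (1 - \Prob{X \leq s})\, ds$, and plugging in the tail bound yields $\Ex{\min(X,t)} \geq \int_0^t (1 - s/R(\proxy))_+\, ds$.

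The remaining step is a routine evaluation of this integral, handled in two regimes. When $t \leq R(\proxy)$, the integral equals $t - t^2/(2R(\proxy)) \geq t/2$, and dividing by $\Prob{X \leq t} \leq t/R(\proxy)$ gives $f(t) \geq R(\proxy)/2$. When $t > R(\proxy)$, the integrand vanishes beyond $R(\proxy)$ so the integral is exactly $R(\proxy)/2$, and dividing by $\Prob{X \leq t} \leq 1$ again yields $f(t) \geq R(\proxy)/2$. The main subtlety is precisely this second regime: the linear tail bound becomes vacuous for $s > R(\proxy)$, and one must rely on the positive-part truncation rather than hoping for additional decay; fortunately the mass collected on $[0, R(\proxy)]$ alone already suffices. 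Minimizing over $t$ then gives $\Opt \geq R(\proxy)/2$, and combining with the upper bound completes the proof.
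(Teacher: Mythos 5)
Your proof is correct, but it takes a genuinely different route from the paper's. You obtain the lower bound uniformly: from the minimality of $R$ at $\proxy$ you extract the tail bound $\Prob{X \le s} \le s/R(\proxy)$, rewrite $f(t) = \Ex{\min(X,t)}/\Prob{X \le t}$ (a one-line check against the formula in \lemref{full:k}), and integrate the layer-cake identity $\Ex{\min(X,t)} = \int_0^t \Prob{X > s}\, \diffX{s}$ against that bound, handling the regimes $t \le R(\proxy)$ and $t > R(\proxy)$ separately, to conclude $f(t) \ge R(\proxy)/2$ for \emph{every} threshold $t$, hence $\Opt = f(\Delta) \ge R(\proxy)/2$. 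The paper instead argues only at the optimizer $\Delta$, splitting on $\beta = \Prob{X \le \Delta}$: when $\beta \le \tfrac78$ the restart term alone gives $f(\Delta) \ge \tfrac18 R(\Delta) \ge \tfrac18 R(\proxy)$, and when $\beta > \tfrac78$ it introduces the conditional median $\medianC = \MedCond{X}{X \le \Delta}$, bounds it by twice the conditional mean via Markov's inequality, and shows $R(\proxy) \le R(\medianC) \le \tfrac{16}{7}\medianC \le 5 f(\Delta)$. Your route buys a cleaner and quantitatively stronger statement --- a uniform bound with explicit constant $\tfrac12$ (versus the paper's effective $\tfrac18$), valid for every fixed-\TTL threshold rather than only the optimal one --- at the modest price of the layer-cake computation, whereas the paper's case analysis stays entirely within discrete manipulations of conditional expectations and medians. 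Your upper bound $f(\Delta) \le f(\proxy) \le R(\proxy)$ is identical to the paper's, and both arguments rely on \lemref{full:k} to identify $\Opt$ with $f(\Delta)$.
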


\begin{proof}
    Let $X$ be the running time of \alg. Using the notations of \lemref{full:k}, let $\Delta = \fTTLX{\alg}$, and let $O( f(\Delta) )$ be the expected running time of the optimal strategy for $\alg$. Let $\beta = \Prob{X \leq \Delta}$.  \lemref{full:k} implies that the expected running time of the \TTL simulation of \alg with \TTL $\Delta$ is
    \begin{equation*}
        f(\Delta)
        =%
        \ExCond{X}{X \leq \Delta} + \frac{1- \beta}{\beta} \Delta
        \leq
        \frac{\Delta}{\beta},
    \end{equation*}
    and $f(\Delta) \leq f(\proxy) \leq R(\proxy) = \proxy/\pr$. So $O(R(\proxy))$ is an upper bound on the expected running time of the optimal simulation of \alg.

    If $\beta \leq \tfrac78$, then $f(\Delta) \geq \tfrac18 \Delta/\beta = \tfrac18 R(\Delta) \geq \tfrac18 R(\proxy) = \Omega(\proxy/\pr)$.

    So assume $\beta > \tfrac78$. Let $\medianC = \MedCond{X }{ X \leq \Delta}$ and $\xi = \ExCond{X}{X \leq \Delta} $.  By Markov's inequality, we have that $\medianC \leq 2 \xi$. Namely, $f(\Delta) \geq \medianC/2$.  Observe that
    \begin{align*}
      \Prob{ X \leq \medianC }
      &=
      \Prob{ (X \leq \medianC) \cap (X \leq \Delta)  }
      \qquad\qquad\qquad\qquad
      \\&%
      =
      \ProbCond{ X \leq \medianC}{X \leq \Delta  } \Prob{X \leq \Delta}
      \geq
      \frac{ \beta}{2}
      >
      \frac{7}{16}.
    \end{align*}
    Thus, we have
    \begin{math}
        R(\proxy)%
        \leq%
        R(\medianC)%
        =%
        \frac{\medianC}{\Prob{X \leq \medianC}}%
        \leq%
        \frac{16}{7}\medianC%
        \leq%
        5 f(\Delta).
    \end{math}
\end{proof}

\section{Strategies}
\seclab{strategies}

A significant portion of this section is dedicated to the counter search strategy, which we show is a valuable tool for \SBMP algorithms. We thus not only describe the strategy, but also provide both intuition and mathematical analysis of the method in order to allow interested readers to more easily harness it. This strategy is tried-and-tested in many applications, and our experimental results show its performance record extends to the motion planning problem.

After introducing and discussing counter search, two new strategies are presented. We start with strategies using only the stop/start operations while simulating an algorithm, but, rather than having a known infinite sequence of \TTL{}s like counter search, our strategies draw a random \TTL at each iteration based on carefully devised distributions. These strategies perform on par with the counter search strategy, but their random nature means they can be parallelized in a fully decentralized manner without creating a shared resource, making them easy to implement and use. However, the theoretical guarantees they provide are only with high probability.

Next, we abandon the stop/start model and allow pause/resume operations that give rise to two more strategies, one simulating concurrently running an algorithm at different speeds, and one caching a subset of earlier simulations of the algorithm and resuming them rather than starting from scratch every time. The idea behind this model is a very simple one -- ``waste not''. Runs of the algorithm whose \TTL was reached without achieving success can be resumed to reach a higher \TTL, thus reusing the computational effort invested.  The experimental results reveal that these strategies are highly affected by hardware differences, and reusing the work is not enough to mitigate the overhead caused by the paused processes.

\subsection{Counter search strategy}
\seclab{counter:search:strategy}

Here we describe a strategy due to Luby \etal \cite{lsz-oslva-93} -- our exposition is different and is included here for the sake of completeness.

\subsubsection{$k$-front}

Let $\alg$ be a Las Vegas algorithm. The work quantity associated with $\alg$ can be visualized on a 2D system of axes as the area of an axis-aligned rectangle with corners at the origin and $\profileX{\alg} = (x,y)$ (viewed as a 2D point) -- see \Eqref{profile}. Thus, all the algorithms with the same amount of work $\alpha$ have their profile points on the hyperbola $\hprofileX{\alpha} \equiv( y=\alpha/x)$. See \figref{parabola} for an illustration.

\begin{figure}[h!]
    \centering%
       \includegraphics[scale=0.7]{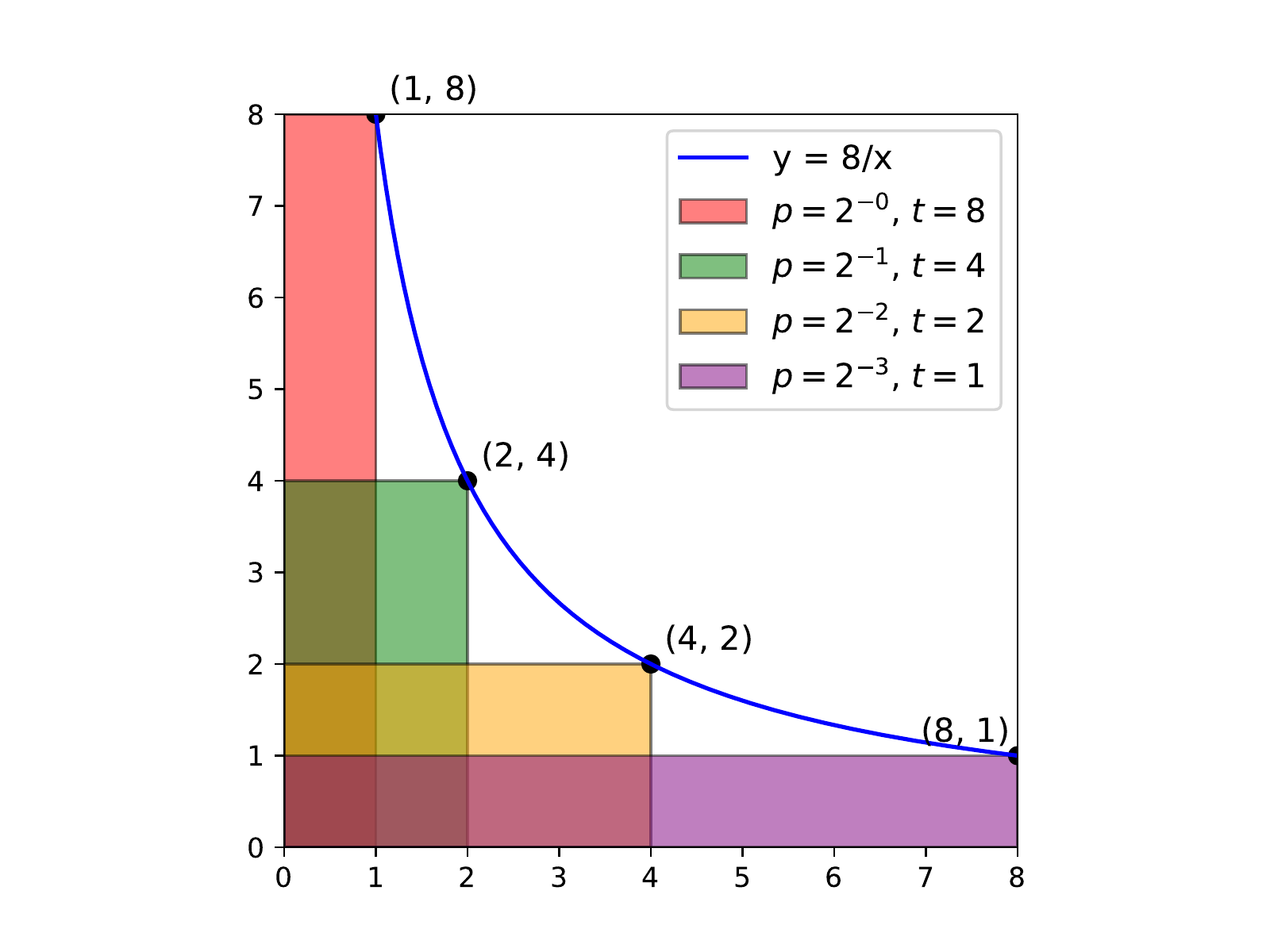}%
       \vspace{-0.5cm}
    \caption{An illustration of the work associated with four different Las Vegas algorithms with different profiles $(1/\pr, \proxy)$.  The upper right corner of every rectangle is its profile. $\pr$ and $\proxy$ are explicitly given in the legend.  }
    \figlab{parabola}
\end{figure}

A natural way to figure out all the profiles of algorithms that the prefix $t_1, \ldots, t_k$ (of a sequence $\Seq$), suffices for in expectation, i.e., a strategy for \alg should have a successful run using only these \TTL{}s), is to sort them in decreasing order, and interpret the resulting numbers $t_1' \geq t'_2, \ldots, t_k'$ as a bar graph (each bar having width $1$). All the profiles under this graph are ``satisfied'' in expectation using this prefix. We refer to the function formed by the top of this histogram as the $k$-\emphi{front} of $\Seq$. See \figref{k:front} for an example.

\begin{figure}[h!]
    \centering%
    \includegraphics[scale=0.7]{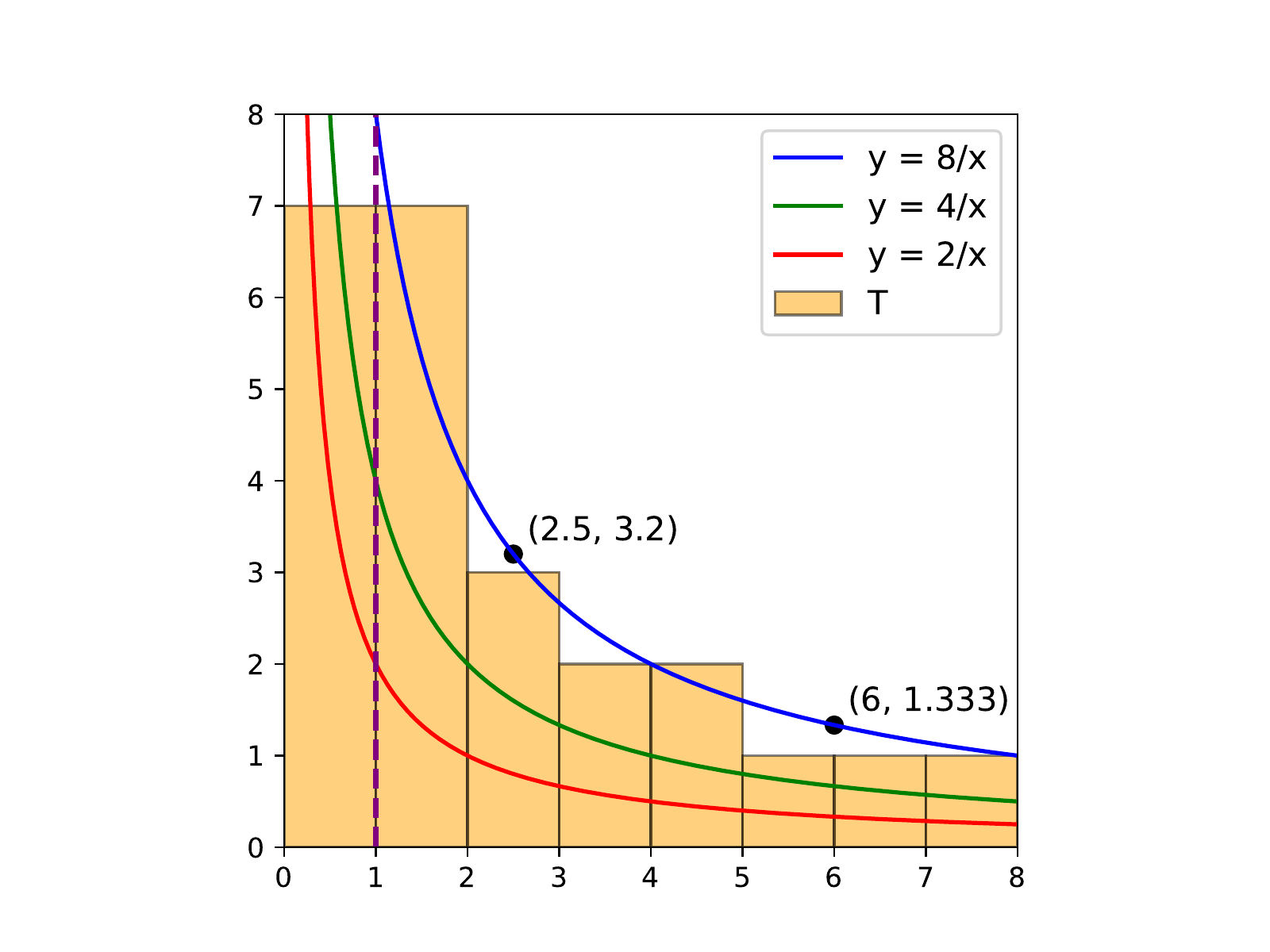}
    \vspace{-0.5cm}
    \caption{An example of an $8$-front of a sequence of \TTL{}s $\T = [7,7,3,2,2,1,1,1...]$. Three parabolas for algorithms with different associated work quantities are plotted alongside the $8$-front of $\T$. Two profiles of algorithms that, in expectation, may not have a successful run are labeled. Note that since we assume that the minimum runtime of an algorithm is one second, the part of the plot where $x<1$ is inconsequential, and the intersection of the parabola with the vertical line $x=1$ is relevant only if the algorithm is deterministic (meaning $\pr = 1$).  }
    \figlab{k:front}
\end{figure}

A key insight is that even if we knew that the profile of \alg lies on the hyperbola $\hprofileX{\alpha} \equiv( y=\alpha/x)$, we do not know where it lies on this hyperbola. Thus, an optimal sequence front should sweep over the \emphi{$\alpha$-work hyperbola}
\begin{equation*}
    \hprofileX{\alpha}
    =%
    \Set{ \Bigl. (1/\pr, t) }{ t/\pr = \alpha \text{ and } 1 \leq 1/\pr
       \leq \alpha }
\end{equation*}
more or less at the same time.

\subsubsection{The strategy}

In the following, let $\{x\}^y$ be the sequence of length $y$ made out of $x$s.  Fix a value $2^i$. We are trying to ``approximate'' the hyperbola $\hprofileX{2^i}$ with a sequence whose front roughly coincides with this hyperbola.  One way to do so is to consider a sequence $S_1$ containing the numbers $\{2^i\}^1, \{2^{i-1}\}^2,\ldots \{2^{0}\}^{2^{i+1}}$ (not necessarily in this order!).

The intuition for this choice is the following -- we might as well round the optimal threshold of the algorithm to the closest (bigger) power of $2$, and let $\nabla$ denote this number. Unfortunately, even if we knew that the work of the optimal simulation is $\alpha = 2^i$, we do not know which value of $\nabla$ to use, so we try all possible choices for this value. One can
 verify that $S_i$ dominates the work hyperbola $\hprofileX{2^i}$ (for all integral points).

\begin{figure}[h!]
    \centering%
    \includegraphics{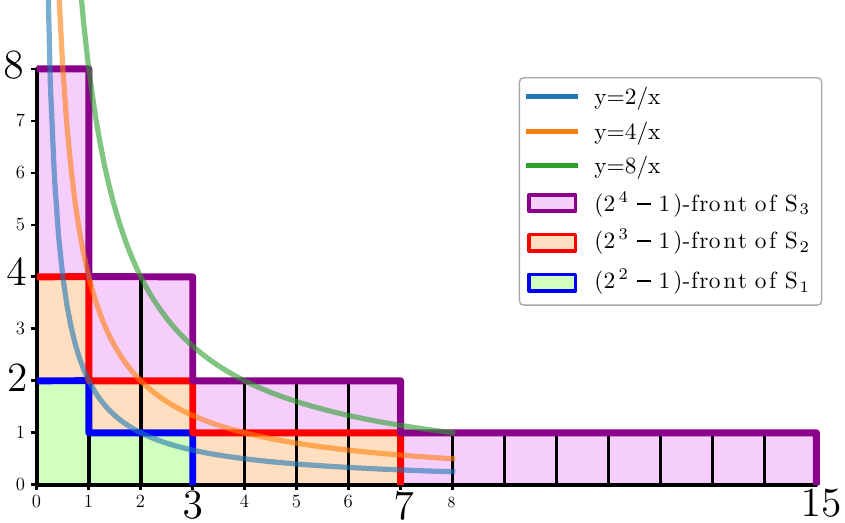}%
    \caption{Plots of three $k$-fronts of \TTL sequences generated by the counter strategy. Note that the bars ``hide'' each other, e.g., the bars of $S_3$ (light purple) have $y$-values (8, 4, 4, 2,...). In particular, the region covered by $S_1 =2,1,1$ is bounded by the blue staircase. Similarly, $S_2=4,2,2,1,1,1,1$ is bounded by the red staircase, and $S_3=8,4,4,2,2,2,2,1,\ldots,1$ is bounded by the purple staircase. }
    \figlab{luby:k:front}
\end{figure}

What is not clear is how to generate an infinite sequence whose prefix is $S_i$ if one reads the first $|S_i|$ elements of this sequence, for all $i$. As Luby \etal \cite{lsz-oslva-93} suggest, ignoring the internal ordering of $S_i$, we have that $S_{i+1} = S_i, S_i, 2^{i+1}$, where $S_0 = 1$. Indeed, every number in $S_i$ appears twice in $S_{i+1}$ except for the largest value, which is unique. We have $S_1 = 1,1,2$, $S_2 = 1,1,2,1,1,2,4$, and so on. See \figref{luby:k:front} for an illustration of $k$-fronts generated by the strategy.

Algorithmically, there is a neat way to generate this \emphi{counter strategy} \TTL sequence.  Let $c$ be an integer counter initialized to $0$. To generate the next elements in the sequence, at any point in time, we increase the value of $c$ by one, and add all the numbers $2^i$, for $i=0, 1, 2, 3, \ldots$, that divide $c$ into the sequence. Thus, the sequence is
\begin{equation*}
    T \equiv
    \underbrace{1}_{c=1},\quad \underbrace{1,2}_{c=2},\quad
    \underbrace{1}_{c=3}, \quad
    \underbrace{1,2,4}_{c=4},
    \quad
    \underbrace{1}_{c=5}, \ldots
\end{equation*}

\begin{lemma}[\cite{lsz-oslva-93}]
    Running a stop/start simulation on an algorithm \alg, using the sequence $T$ above, results in expected running time $O( \Opt \log \Opt)$, where $\Opt$ is the expected running time of the optimal simulation for \alg.
\end{lemma}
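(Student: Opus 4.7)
The plan is to leverage \lemref{equiv} to reduce the problem to bounding the expected running time in terms of the profile work $W = \proxy/\pr = \Theta(\Opt)$, and then show that once the counter-strategy sequence has emitted a long-enough prefix, the $k$-front dominates the hyperbola $\hprofileX{W}$ with enough redundancy to drive the failure probability down geometrically.

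Concretely, first I would observe that the multiset of the first $|S_i| = 2^{i+1}-1$ emissions of the sequence $T$ equals $S_i$, and that the total sum of the entries in $S_i$ (which is the wall-clock time to complete the first $S_i$ rounds of the simulation) equals $\sum_{j=0}^{i}(j+1)2^j = O(i \cdot 2^i)$. Next, let $j$ be the smallest integer with $2^j \geq \proxy$ (so $2^j < 2\proxy$), and let $i^\star$ be the smallest integer with $2^{i^\star} \geq 2W$. Then $S_{i^\star}$ contains $2^{i^\star - j}$ copies of the threshold $2^j \geq \proxy$, and
\begin{equation*}
    2^{i^\star - j} \;=\; \frac{2^{i^\star}}{2^j} \;\geq\; \frac{2W}{2\proxy} \;=\; \frac{1}{\pr}.
\end{equation*}
Each such run succeeds with probability at least $\pr$ (by definition of $\proxy$), independently, so the probability that none of them succeeds during the $S_{i^\star}$ phase is at most $(1-\pr)^{1/\pr} \leq 1/e$.

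To finish, I would set up the standard tail-sum bound. Let $\Event_k$ be the event that no run in $S_0 \cup S_1 \cup \cdots \cup S_{i^\star+k-1}$ succeeds. By the same reasoning, since $S_{i^\star + k}$ contains at least $2^k/\pr$ copies of a \TTL of value at least $\proxy$, we have $\Prob{\Event_{k+1}} \leq e^{-2^k}$. Hence
\begin{equation*}
    \Ex{\text{total time}} \;\leq\; \sum_{k=0}^{\infty} \Prob{\Event_k}\cdot \mathrm{sum}(S_{i^\star+k}) \;=\; O\!\pth{\sum_{k=0}^{\infty} e^{-2^{k-1}} \cdot (i^\star+k)\, 2^{i^\star + k}}.
\end{equation*}
Because $e^{-2^{k-1}}$ eventually crushes the polynomial/geometric factor $(i^\star+k)\,2^k$, the sum is dominated by its $k=0$ term, giving $O(i^\star \cdot 2^{i^\star}) = O(W \log W)$. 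Invoking \lemref{equiv} to write $W = \Theta(\Opt)$ yields the stated $O(\Opt \log \Opt)$ bound.

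The main obstacle, in my view, is the tail-sum bookkeeping in the last step: one has to justify carefully that the cumulative contribution of all phases $S_{i^\star+k}$ for $k \geq 1$ does not blow up, and that the multiset interpretation of the first $|S_i|$ elements of $T$ faithfully corresponds to what the simulation has actually executed at that moment (as opposed to some other ordering). The probabilistic bound itself is a clean Chernoff-style estimate, but one must be careful that ``success in $S_{i^\star+k}$'' is well-defined given the stop/start model, which follows because each simulation round is independent and memoryless across restarts.
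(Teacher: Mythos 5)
Your proposal is correct, and it rests on the same two pillars as the paper's proof -- reducing $\Opt$ to the profile via \lemref{equiv}, and exploiting the fact that the counter sequence packs many runs with \TTL at least $\proxy$ into a prefix whose total length is only $O(W\log W)$ -- but the accounting is genuinely different. The paper works with the random stopping point directly: it lets $B$ (geometric with parameter $\beta$) be the number of \TTL{}s of size at least $\Delta$ consumed before success, bounds the largest power of two used by $\ell = 1+\lceil\log_2(B\Delta)\rceil$, and bounds the total work by $\sum_i S(i,k) = O(\ell B \Delta)$ before taking expectations. You instead fix deterministic checkpoints (the nested prefixes $S_{i^\star+k}$), observe that each checkpoint contains $\geq 2^k/\pr$ runs with \TTL $\geq \proxy$, and get a doubly-exponentially decaying failure probability per checkpoint, which makes the tail sum collapse to its first term. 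Your route is slightly longer but buys something the paper's does not state: an explicit concentration bound (the probability of exceeding $c\,\Opt\log\Opt$ by a factor $2^k$ decays like $e^{-2^{k-1}}$), and it avoids the mild awkwardness in the paper of taking an expectation of the product $\ell B \Delta$ where $\ell$ is itself a function of $B$. Two small slips, neither of which affects the result: the exact sum of $S_i$ is $(i+1)2^i$, not $\sum_{j=0}^{i}(j+1)2^j$ (your expression overcounts by a constant factor, so the $O(i\,2^i)$ bound survives); and in the tail sum the $k=0$ term should be charged with weight $1$ rather than $\Prob{\Event_0}$ (or one must separately account for successes occurring before the prefix $S_{i^\star-1}$ is exhausted, which cost at most $\mathrm{sum}(S_{i^\star})$ anyway) -- exactly the bookkeeping point you flagged, and it is repaired by bounding that probability by one.
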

\begin{proof}
    Let $S(i,k)$ be the sum of all the appearances of $2^i$ in the first $k$ elements of $T_k$. For $2^i > 2^j$ (both appearing in the first $k$ elements), we have $S(i,k) \leq S(j,k) \leq 2S(i,k)$, as one can easily verify.

    Let $(1/\beta, \Delta)$ be the profile of $\alg$, see \defref{profile}.  The optimal simulation takes in expectation $\Opt= \Theta(\Delta/\beta)$ time.  Let $k$ be the number of elements of $T$ that were used by the simulation before it stopped.  Let $B$ be the number of elements in $t_1, \ldots, t_k$, at least as large as $\Delta$.  Clearly, $B \sim \mathrm{Geom}(\beta)$ (to be more precise, it is stochastically dominated by it). Thus, we have $\Ex{B} \leq 1/\beta$. Furthermore, the quantity $\ell = 1 +\ceil{\log_2 (B \Delta)}$ is an upper bound on the largest index $\ell$ such that $2^\ell$ appears in $t_1, \ldots, t_k$.  An easy calculation shows that $\Ex{\ell} =O(1 + \log (\Delta/\beta) ) = O( \log \EWX{\alg})$.  The total work of the simulation thus is $\sum_{i=0}^\ell S(i,k) = O( \ell B \Delta)$, which in expectation is $O( \EWX{\alg} \log \EWX{\alg} ) = O( \Opt \log \Opt)$.
\end{proof}

\subsection{New restart strategies}
\seclab{new:strategies}

In the following, we present and discuss several restarting strategies. We start by introducing randomized \TTL strategies, namely \emph{random $\zeta$ search} and \emph{random counter search}, in \secref{random:search}. These strategies randomly generate \TTL{}s whenever a new run starts instead of using a predetermined sequence of \TTL{}s. We prove that both strategies are optimal.

We then introduce two more strategies for which we abandon the ``stop/start only'' constraint, and allow the use of pausing. The \emph{wide search} strategy, discussed in \secref{wide:search}, simulates different copies of \alg running at different speeds by pausing some runs while allowing others to progress. The \emph{cache + X} strategy, which is the combination of a caching mechanism and a restarting strategy, maintains a set of paused runs of \alg that are ``extended'' to longer runs instead of simulating \alg from scratch at every step. We provide a proof of optimality only for wide search, as the caching mechanism maintains the optimality of the X component of cache + X.

\subsection{Parallelizable random search}
\seclab{random:search}

A natural approach is to generate the sequence of \TTL{}s using some prespecified distribution, repeatedly picking a value $t_i$ before starting the $i$\th run. This generates a strategy that can easily be implemented in a distributed parallel fashion without synchronization, with the additional benefit of being shockingly simple to describe.

Intuitively, our goal is to generate a distribution $\Distrib$ such that the $k$-front created by sampling $\Distrib$ $k$ times will approximate the space under the hyperbola $\hprofileX{2^{\log k - 1}}$. These distributions are asymptotically optimal and generate thresholds other than $2^i$.

\subsubsection{The Zeta $2$ distribution}
\seclab{r:zeta:2}

The Riemann zeta function is $\zeta(s) = \sum_{i=1}^\infty 1/i^s$. In particular, $\zeta(2) = \pi^2/6$ (i.e., Basel problem). The \emphi{$\zeta(2)$ distribution} over the positive integers, has for an integer $i>0$, the probability $\Prob{X=i} = c/i^2$, where $c = 6/\pi^2$.

\begin{lemma}
    \lemlab{zeta:optimality}%
    With probability $\geq 1 -1 /\Opt^{O(1)}$, the expected running time of a simulation of $\alg$, using a \TTL sequence sampled from the $\zeta(2)$ distribution is $O( \Opt \log \Opt)$, where $\Opt$ is the minimal expected running time of any simulation of \alg.
\end{lemma}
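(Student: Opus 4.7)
The plan is to parallel the block-based analysis of the counter strategy in the randomized setting. Conditional on the sampled sequence, the expected runtime factors as
\[
\ExCond{T}{\text{seq}}
=
\sum_{i \geq 1} q_{i-1}\cdot \ExCond{\min(X_i, t_i)}{t_i}
\;\leq\;
Z := \sum_{i \geq 1} q_{i-1}\, t_i,
\]
where $q_j = \prod_{\ell \leq j}(1 - \Prob{X \leq t_\ell})$ is the (deterministic, given the sequence) probability that the simulation is still running after round $j$. The goal reduces to showing $Z = O(\Opt \log \Opt)$ with probability $\geq 1 - 1/\Opt^{O(1)}$ over the random sequence.

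Call sample $i$ \emph{big} if $t_i \geq \proxy$; under $\zeta(2)$ one has $\Prob{\text{big}} = \Theta(1/\proxy)$, and a big round succeeds with probability $\geq \pr$, so after the $k$-th big round $q \leq (1-\pr)^k$. Let $i_1 < i_2 < \cdots$ be the big indices and $G_k = i_{k+1} - i_k - 1$ the geometrically distributed gap (with $\Ex{G_k} = \Theta(\proxy)$). Decomposing by block,
\[
Z
\;\leq\;
\sum_{k \geq 0} (1-\pr)^k \sum_{\text{small rounds in block } k} t_i
\;+\;
\sum_{k \geq 1} (1-\pr)^{k-1}\, t_{i_k}.
\]
Since $\ExCond{t_i}{t_i < \proxy} = \Theta(\log \proxy)$, the ``small'' sum has expectation $\sum_k (1-\pr)^k \cdot \Ex{G_k} \cdot \Theta(\log \proxy) = O(\proxy \log \proxy / \pr) = O(\Opt \log \Opt)$; each block's contribution is independent and every small $t_i$ is bounded by $\proxy$, so a Bernstein-type inequality applied to the weighted iid sum yields the $O(\Opt \log \Opt)$ bound with probability $\geq 1 - \Opt^{-\Omega(1)}$.

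The ``big'' sum is the delicate part because $\ExCond{t}{t \geq \proxy} = \infty$. Introduce a $k$-dependent truncation $T_k := \proxy \cdot \Opt^{c}\cdot (1-\pr)^{-(k-1)}$ and let $E$ be the event that $t_{i_k} \leq T_k$ for every $k \geq 1$; since $\Prob{t > T_k \mid t \geq \proxy} = \proxy/T_k = \Opt^{-c}(1-\pr)^{k-1}$, a union bound gives $\Prob{E^c} \leq \Opt^{-c}/\pr \leq \Opt^{-\Omega(1)}$ (using $\pr \geq 1/\Opt$ and choosing $c$ large enough). Under $E$ each truncated $t_{i_k}$ has expectation $\Theta(\proxy \log(T_k/\proxy)) = \Theta(\proxy(\log\Opt + k\pr))$, so the weighted sum has expectation $\sum_k(1-\pr)^{k-1}\Theta(\proxy\log\Opt + \proxy k\pr) = O(\Opt\log\Opt)$. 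A Bernstein-type bound applied to this weighted sum of bounded independent variables then converts the expectation into the desired $O(\Opt \log \Opt)$ high-probability bound on $Z$, and hence on $\ExCond{T}{\text{seq}}$. The main obstacle is precisely this last concentration step: the per-term variance of a truncated $t_{i_k}$ scales with $T_k$, and one has to track how it interacts with the exponentially decaying weights $(1-\pr)^{k-1}$, since a naive application of Markov to the weighted sum would inflate the answer by an extra $\Opt^{\Omega(1)}$ factor and destroy the $O(\Opt \log \Opt)$ bound.
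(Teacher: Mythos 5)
Your plan targets a strictly stronger statement than the one the paper actually establishes, and that stronger statement is false; the step you flag as ``the main obstacle'' is not a technical nuisance but a genuine impossibility. You aim to show that, with probability $1-1/\Opt^{O(1)}$ over the sampled sequence, the \emph{conditional} expectation $Z=\sum_i q_{i-1}t_i$ is $O(\Opt\log\Opt)$. Consider an algorithm whose running time equals $\proxy$ with probability $\pr$ and is $+\infty$ otherwise, with $\pr$ a constant, so $\Opt=\Theta(\proxy)$. Every ``small'' round then fails with certainty, so $q_{i_1-1}=1$, and the first big \TTL satisfies $\Prob{t_{i_1}\geq C\,\Opt\log\Opt}=\Theta\pth{\proxy/(\Opt\log\Opt)}=\Theta(1/\log\Opt)$. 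On that event the single term $q_{i_1-1}t_{i_1}$ already exceeds $C\,\Opt\log\Opt$, and the same holds for the conditional expectation itself (not just for the upper bound $Z$). A failure probability of $\Theta(1/\log\Opt)$ cannot be pushed down to $1/\Opt^{O(1)}$ by any concentration inequality, because it is the exact tail of one dominant heavy-tailed term with weight one; your $k$-dependent truncation does not help, since already $T_1=\proxy\,\Opt^{c}$ admits values of $t_{i_1}$ polynomially larger than $\Opt\log\Opt$ that occur with probability $\Theta(1/\log\Opt)$.

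The paper proves a weaker, correctly calibrated assertion: it conditions once on the benign event $\Good$ that all $L=\Opt^{O(1)}$ relevant \TTL{}s are at most $L^2$, which holds with probability $1-O(1/L)$, and then bounds the expectation \emph{jointly} over the \TTL{}s and the algorithm's coins. Conditioned on $\Good$, each \TTL has expectation $\ExCond{t_i}{t_i\leq L^2}=O(\log L)=O(\log\Opt)$, each round succeeds with probability at least $\Theta(\pr/\proxy)=\Theta(1/\Opt)$, so the number of rounds is stochastically dominated by a geometric variable with mean $O(\Opt)$, and the product gives $O(\Opt\log\Opt)$. Your block decomposition and the estimate $\ExCond{t_i}{t_i<\proxy}=\Theta(\log\proxy)$ for the small rounds are fine and essentially reproduce the paper's $O(\log L)$ computation, but to obtain a complete proof you must retreat to the paper's reading of the statement: average over the heavy tail of the big \TTL{}s rather than trying to control it pointwise with polynomially small failure probability.
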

\begin{proof}
    Let $(1/\beta, \Delta)$ be the profile of \alg, and let $L = c \cdot \Opt^{c}$, where $\Opt = \Theta(\Delta/\beta)$ and $c$ is some sufficiently large constant, let $\Seq_L = t_1, t_2, \ldots, t_L \sim \zeta(2)$ be the random sequence of \TTL{}s used by the simulation, and let $X \sim \zeta(2)$.  A value $t_i \in \Seq_L$ is \emph{good} if $t_i \leq L^2$. Clearly, the probability that any of the values in $\Seq_L$ is not good is at most $L \Prob{X \geq L^2} = O(1/L)$, since for $i > 1$, we have
    \begin{align*}
      \Prob{X \geq i}
      &=%
      \sum_{\ell=i}^{\infty} \frac{6/\pi^2}{\ell^2}
      \leq%
      \frac{6}{\pi^2}\sum_{\ell=i}^{\infty} \frac{1}{\ell(\ell -1)}
        \\&%
        =%
      \frac{6}{\pi^2}\sum_{\ell=i}^{\infty} \pth{ \frac{1}{\ell-1}
      -\frac{1}{\ell}}
      =
      \frac{6}{\pi^2(i-1)}.
    \end{align*}
    A similar argument shows that
    \begin{math}
        \Prob{X \geq i} \geq \frac{6}{\pi^2i}.
    \end{math}
    The event that all the values of $\Seq_L$ are smaller than $L^2$ is denoted by $\Good$.  Observe that
    \begin{align*}
      \tau %
      &=%
        \ExCond{X }{X \leq L^2}
      =%
      \frac{1}{\Prob{X \leq L^2}} \sum_{i=1}^{L^2}
      \frac{6i}{\pi^2i^2}
        \\&%
        =
      O( \log L)
      =
      O( \log \Opt).
    \end{align*}
    Since $\Prob{t_i \geq \Delta} \geq 6/(\pi^2 \Delta)$, the probability that the algorithm terminates in the $i$\th run (using \TTL $t_i$) is at least $ \alpha = 6 \beta/(\pi^2 \Delta)$. It follows that the simulation in expectation has to perform $M = 1/\alpha = (6/\pi^2)\Opt$ rounds. Thus, the expected running time of the simulation is $\sum_{i=1}^M \ExCond{t_i}{\Good} = M \tau = O( \Opt \log \Opt)$.
\end{proof}

\subsubsection{Random counter search}
\seclab{r:counter}%

Consider a generated threshold $t$ as a binary string $b_1 b_2 \ldots, b_k$, where the numerical value of this string is the value it encodes in base two: $t = \sum_{i=1}^k b_i 2^{k-i}$.

\begin{defn}
    For a natural number $t > 0$, the number of bits in its binary representation, denoted $\bitsX{t}$, is its \emphi{length} -- that is, $\bitsX{t} = 1+\floor{\log_2 t}$.
\end{defn}

We generate a random string as follows. Let $s_1 =1$. Next, in each step, the string is finalized with probability $1/2$. Otherwise, a random bit is appended to the binary string, where the bit is randomly chosen between $0$ or $1$ with equal probability. We denote the resulting distribution on the natural numbers \BIN.  Observe that for $R \sim \BIN$, the probability that $R$ length is $k$ (i.e., $\bitsX{R} = k$) is $1/2^k$, for $k\geq 1$. Also, observe that $R$ has a uniform distribution over all the numbers of the same length.

\newcommand{\RandomCnt}{Random Counter\xspace}

Note that if we were to ``round'' every value drawn from the \BIN down to the nearest value of the form $2^i$, we would get the geometric distribution perfectly mimicking the counter search strategy. We refer to the random strategy using $\BIN$ to sample the \TTL{}s as the \emphi{\RandomCnt} simulation.

\begin{lemma}
    With probability $\geq 1 -1 /\Opt^{O(1)}$, the expected running time of the \RandomCnt simulation is $O( \Opt \log \Opt)$.
\end{lemma}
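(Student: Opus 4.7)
The plan is to follow the template of \lemref{zeta:optimality}, with \BIN playing the role that $\zeta(2)$ plays there. The two distributions are essentially interchangeable for our purposes: for $R \sim \BIN$, a length-$k$ value occurs with probability $1/2^k$ and, conditional on its length, $R$ is uniform over $\{2^{k-1}, \ldots, 2^k-1\}$, so $\Prob{R = j} = \Theta(1/j^2)$ for every positive integer $j$. The first step is to record two tail bounds analogous to those used for $\zeta(2)$: an upper tail $\Prob{R \geq m} \leq 2/m$ obtained by summing $\Prob{\bitsX{R} = k} = 1/2^k$ for $k \geq \bitsX{m}$, and a lower tail $\Prob{R \geq \Delta} \geq 1/(2\Delta)$ obtained by noting that any string strictly longer than $\bitsX{\Delta}$ already exceeds $\Delta$, an event of probability $1/2^{\bitsX{\Delta}} \geq 1/(2\Delta)$.

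Let $(1/\beta, \Delta)$ be the profile of \alg, so $\Opt = \Theta(\Delta/\beta)$ by \defref{profile}. Set $L = c \cdot \Opt^c$ for a sufficiently large constant $c$, let $\Seq_L = t_1, \ldots, t_L \sim \BIN$ be the sampled \TTL sequence, and let $\Good$ be the event that $t_i \leq L^2$ for every $i \leq L$. The upper tail bound and a union bound give $\Prob{\overline{\Good}} \leq L \cdot 2/L^2 = 1/\Opt^{O(1)}$, so \Good has the probability promised in the statement.

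Under $\Good$, the per-round expected threshold is
\[
\tau = \ExCond{t_i}{t_i \leq L^2}
= O(1) \cdot \sum_{k=1}^{2\bitsX{L}} \frac{1}{2^k} \cdot \Theta(2^k)
= O(\log L) = O(\log \Opt),
\]
since $\Prob{\bitsX{t_i}=k} = 1/2^k$, $\ExCond{t_i}{\bitsX{t_i}=k} = \Theta(2^k)$, and $\Prob{\Good} \geq 1/2$ is absorbed into the constant. By the lower tail bound, the probability that the $i$\th round succeeds is at least $\Prob{t_i \geq \Delta} \cdot \Prob{X \leq \Delta} \geq \beta/(2\Delta)$, so the expected number of rounds before success is $M = O(\Delta/\beta) = O(\Opt)$. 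Multiplying yields total expected running time $M \tau = O(\Opt \log \Opt)$ conditional on \Good, mirroring the conclusion of \lemref{zeta:optimality}.

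The main obstacle is the routine bookkeeping around the conditioning on \Good: one must check that neither $\tau$ nor the per-round success probability degrades by more than a constant factor when restricted to $t_i \leq L^2$, and that the contribution from the (negligible-probability) event $\overline{\Good}$ does not spoil the expectation. The first is immediate from $\Prob{\Good} \geq 1/2$; the second is immediate since the lower bound on the success probability is witnessed by thresholds that already satisfy $t_i \leq 2\Delta \ll L^2$; and the third follows because even in the tail event where more than $L$ rounds are needed, the running time is polynomially bounded in \Opt, which is swamped by the $1/\Opt^{O(1)}$ probability of that event.
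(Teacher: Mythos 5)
Your proposal follows the paper's argument essentially verbatim: the same event $\Good$ with $L = c\cdot\Opt^{c}$, the same sandwich tail bounds on $R\sim\BIN$ via $\bitsX{\cdot}$ (your $\Prob{R\geq\Delta}\geq 1/(2\Delta)$ is the paper's \Eqref{in:between} up to a constant), the same $O(\log\Opt)$ bound on the conditional per-round threshold, the same $\Omega(1/\Opt)$ per-round success probability, and the same product of the two. It is correct and takes the same route as the paper, with only cosmetic differences in how the constants and the conditioning on $\Good$ are bookkept.
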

\begin{proof}
    Using the notation of \lemref{zeta:optimality} we let $(1/\beta, \Delta)$ be the profile of \alg, $L = c \cdot \Opt^{c}$, where $\Opt = \Delta/\beta$ and $c$ is some sufficiently large constant, and $T_L = t_1, t_2, \ldots, t_L$ be the random sequence used by the simulation. We also denote values $t_i \in \Seq_L$ as good if $t_i \leq L^2$. Again, we get that the probability that $\Seq_L$ contains a value which is not good is bounded by $1/L$. From this point on, we assume all the values of $T_L$ are good, and denote this event by $\Good$.

    For $R \sim \BIN$, and any integer number $t$, we have
    \begin{equation}
        \begin{aligned}
          \frac{2}{t}
          \geq
          \frac{1}{2^{\bitsX{t} -1}}
          =
          \Prob{\bitsX{R} \geq \bitsX{t}}
          \geq
          \Prob{R \geq t}\qquad\qquad
          \\%
          \geq
          \Prob{\bitsX{R} > \bitsX{t}}
          =%
          \frac{1}{2^{\bitsX{t}}}
          \geq
          \frac{1}{t}.
        \end{aligned}
        \eqlab{in:between}
    \end{equation}
    Observe that %
    \begin{math}
        \ExCond{R}{\bitsX{R} =j} =%
        (2^j + 2^{j-1}-1)/2 \leq 3 \cdot 2^{j-1}.
    \end{math}
    Furthermore, $\Prob{R \geq 2^i} = 1/2^{i-1}$, and for $i > j$, we have $\ProbCond{\bitsX{R} =j }{ R < 2^i} = 2^{-j}/(1-1/2^{i})$. Thus, we have
    \begin{align*}
      \ExCond{R}{R < 2^i}
      &=%
      \sum_{j=1}^{i-1}
      \ExCond{R}{\bitsX{R} =j}
      \ProbCond{\bitsX{R} =j }{ R < 2^i}
      \\&%
      \leq %
      \sum_{j=1}^{i-1}
      \frac{3 \cdot 2^{j-1} }{2^j(1-1/2^{i-1})}
      \leq
      3i.
    \end{align*}
    In particular, for $i=1,\ldots, L$, we have $\Ex{t_i} = O( \log \Opt )$, conditioned on $\Good$.

    A value $t_i$ of $T_L$ is \emph{final} if $t_i \geq \Delta$, and the running of the $i$\th copy of the algorithm with \TTL $t_i$ succeeded. Let $X_i$ be an indicator variable for $t_i$ being final. Observe that
    \begin{align*}
      p
      &=
      \Prob{X_i=1}
      \\&
      =
      \ProbCond{ \alg \text{ successfully runs with \TTL ~}t_i }{t_i \geq
      \Delta}
      \\&\quad\quad\cdot
      \Prob{t_i \geq \Delta}
      \\&
      \geq%
      \beta \cdot \frac{1}{\Delta}
      =
      \frac{1}{\Opt},
    \end{align*}
    by \Eqref{in:between}.

    The number of values of $T_L$ the simulation needs to use until it succeeds is a geometric variable with distribution $\mathrm{Geom}(1/\Opt)$. (Clearly, the probability that the algorithm reads all the values of $T_L$ diminishes exponentially, and can be ignored.) Thus, the algorithm in expectation reads $ 1/p = O(\Opt)$ values of $T_L$, and each such value has expectation $O( \log \Opt)$. Thus, conditioned on $\Good$, the expected running time of the simulation is $O( \Opt \log \Opt )$.
\end{proof}

\subsection{The wide search with pause/resume operations}
\seclab{wide:search}

We now remove the constraint allowing us to only start and terminate runs of \alg, and introduce the pause/resume operation, allowing us to freeze a run mid execution, and later resume it with no effect on the performed computations. Note that we still refer to all non-fixed-\TTL strategies as ``restarting strategies'' for the sake of brevity.  The \emphi{wide search} strategy uses pause/resume operations to simulate running many copies of the algorithm in parallel, but at different speeds.  Specifically, the $i$\th copy of \alg, denoted by $A_i$, is ran at speed $\alpha_i=1/i$, for $i=1,\ldots$. So, consider the simulation immediately after it ran $A_1$ for $t$ seconds --- here $t$ is the \emph{current time} of the simulation. During that time the $i$\th algorithm $A_i$ ran $n_i(t) = \floor{ t/i}$ seconds. Naturally, the simulation instantiates $A_i$ at time $i$, since $n_i(i) = 1$, and then runs it for one second. More generally, the simulation allocates a running time of one second to a copy of the algorithm $A_j$ at the time $t$, when $n_j(t)$ had increased to a new integral value. Then, $A_j$ is resumed for one second. This simulation scheme can easily be implemented using a heap, and we skip the low-level details.

It is not hard to simulate different speeds of algorithms in the pause/resume model, using a scheduler that allocates time slots for the copies of the algorithm according to their speed.

\bigskip

The wide search starts being effective as soon as the running time of the first algorithm exceeds a threshold $\nabla$ that already provides a significant probability that the original algorithm succeeds, as testified by the following lemma.

\begin{lemma}
    \lemlab{w:s:good:p}%
    The wide search simulation described above, executed on \alg with profile $(1/\beta, \Delta)$, has the expected running time $O\pth{ \frac{\nabla}{\beta} \log \frac{\nabla}{\beta}} = O( \Opt \log \Opt)$, where $\Opt$ is the optimal expected running time of any simulation of \alg (here, the $i$\th algorithm is run with speed $\alpha_i = 1/i$).
\end{lemma}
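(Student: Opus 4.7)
The plan is to decouple the internal ``simulation clock'' time $T$ at which the first copy of $\alg$ succeeds from the real CPU time $W$ expended by the simulation, bound each separately, and then combine. Let $X_1, X_2, \ldots$ be i.i.d.\ copies of the running time of $\alg$, where $X_i$ is what the $i$\th simulated copy $A_i$ needs in order to succeed. Since $A_i$ is initialized at sim time $i$ and receives one second of CPU each time $\floor{t/i}$ increments, by sim time $t \geq i$ the copy $A_i$ has performed $\floor{t/i}$ seconds of work; hence $A_i$ finishes at sim time $\approx i X_i$, and the simulation halts at $T = \min_i i X_i$. The real CPU time of the simulation is the total work done across all copies,
\[
W \;=\; \sum_{i=1}^{\floor{T}} \floor{T/i} \;\leq\; T \cdot H_{\floor{T}} \;=\; O(T \log T),
\]
by the harmonic sum estimate.

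Next I would derive a geometric-type tail bound on $T$. For any $t \geq \Delta$, every copy $A_i$ with $i \leq t/\Delta$ has already received at least $\Delta$ seconds of CPU by sim time $t$, so by the profile of $\alg$ such a copy has independently succeeded with probability at least $\beta$; hence $\Prob{T > t} \leq (1-\beta)^{\floor{t/\Delta}}$. Setting $N = \floor{T/\Delta}$, this shows that $N$ is stochastically dominated by a $\mathrm{Geom}(\beta)$ variable, so $\Ex{N+1} = O(1/\beta)$ and, by the standard tail estimate for geometric variables, $\Ex{(N+1)\log(N+1)} = O\pth{(1/\beta)\log(1/\beta)}$.

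To combine, I would use $T \leq (N+1)\Delta$ together with the additive split $\log T \leq \log(N+1) + \log \Delta$ to obtain
\[
\Ex{W} \;=\; O\pth{\Ex{T \log T}} \;\leq\; O\pth{\Delta \log\Delta \cdot \Ex{N+1} \,+\, \Delta \cdot \Ex{(N+1)\log(N+1)}} \;=\; O\pth{\tfrac{\Delta}{\beta}\pth{\log\Delta + \log\tfrac{1}{\beta}}} \;=\; O(\Opt \log \Opt),
\]
using $\Opt = \Theta(\Delta/\beta)$. The main obstacle is the $\log$ factor: since $x\log x$ is convex, Jensen moves in the wrong direction, so the argument cannot simply bound $\Ex{T}$ and take a log afterward; instead one must exploit the sharp geometric tail of $T$ directly, and the additive split of $\log T$ is what cleanly separates the $\log \Delta$ and $\log(1/\beta)$ contributions to $\log \Opt$.
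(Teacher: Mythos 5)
Your proposal is correct and follows essentially the same route as the paper: both bound the total real work at simulation time $t$ by the harmonic sum $O(t \log t)$, and both use the observation that by simulation time $t$ the first $\floor{t/\Delta}$ copies have each received at least $\Delta$ seconds of CPU, giving a geometric-type tail on the halting time, which is then integrated against the work bound. The only difference is bookkeeping: the paper groups copies into blocks of $\ceil{1/\beta}$ to get milestone failure probabilities $e^{-j}$ and sums the work over milestones, whereas you keep per-copy granularity and evaluate $\Ex{(N+1)\log(N+1)}$ for a geometrically dominated $N$ via the additive split of $\log T$ --- the same computation in slightly different clothing.
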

\begin{proof}
    Let $T_j = \nabla\ceil{j /\beta} $. If we run the first algorithm for $T_j$ time, then the first $\ceil{j/\beta}$ algorithms in the wide search are going to be run at least $T_j / \ceil{j/\beta} = \nabla$ time.  The probability that all these algorithms fail to successfully terminate is at most $p_j = (1- \beta)^{\ceil{j/ \beta}} \leq \exp(-j)$.  The total running time of the simulation until this point in time is bounded by
    \begin{equation*}
        S_j = \sum_{i=1}^{T_j} \frac{T_j}{i} = O(T_j \log T_j)
        =%
        O\Bigl( \frac{\nabla j}{\beta} \log \frac{\nabla j}{\beta} \Bigr).
    \end{equation*}
    Thus, the expected running time of the simulation is asymptotically bounded by
    \begin{align*}
      \sum_{j=0}^\infty p_j O(S_{j+1})
      &=
      \sum_{j=0}^\infty
      O\pth{ \exp(-j)  \frac{(j+1)\nabla}{\beta}
      \log \frac{(j+1)\nabla}{\beta}
      }
      \\&%
      = %
      O\pth{ \frac{\nabla}{\beta} \log
      \frac{\nabla}{\beta}}
      =%
      O( \Opt \log \Opt ),
    \end{align*}
    as $\Opt = O( \nabla/\beta)$.
\end{proof}

\subsection{Caching runs}
\seclab{cache}

Another natural approach that utilizes pause/resume operations is, when running any simulation, suspending running algorithms when their \TTL is met instead of terminating them, thus conserving some of the effort expended by running them.  Now, whenever the algorithm needs to be run for a certain time threshold $t_i$, we first check if there is a suspended run that can be resumed to achieve the desired threshold. To avoid encumbering the system by flooding it with suspended runs (as the wide search does), we limit the number of runs stored in the cache, and kill the ``lighter'' runs, that is, runs whose runtime is lower, when that number is exceeded.

This recycling trades some of the randomness of the original scheme for computational efficiency, as long runs of \alg are not started from scratch.

\newcommand{\Catalyst}{\texttt{catalyst}\xspace}

\section{Experiments}
\seclab{experiments}

Here, we present the experiments conducted to evaluate the advantages and weaknesses of the different restarting strategies when applied to \SBMP algorithms. We start by recounting the compared methods and providing important details about the experimental setup, followed by the description of the experiments. For every experiment, we provide the base details about the environment, robot, \SBMP algorithm, and motion planning task, the results, and a short discussion.

Our objective is to highlight the difference in performance caused by using restarting strategies. Since this framework is especially suitable for parallel implementations of \SBMP algorithms, providing a simple and powerful way to harness any number of computation nodes, we compare the different strategies against a parallel-OR benchmark, described in \secref{related:work}. To demonstrate the framework's applicability to the single process model, we first show results on using a single thread before proceeding to the main body of experiments.

Our experiments are all done in simulation, and include a 6 \emph{degrees of freedom} (\DOF) fixed-base manipulator, a \DOFX{6} free-flying object, and mobile planar robots. The first two experiments are conducted in 3D environments that give rise to \RRT queries with a high-variance runtime distribution. The second experiment is conducted in a 2D maze that gives rise to \PRM queries with a low-variance runtime distribution. These experiments aim to show the results of using stochastic resetting in scenarios with different distributions.

The last experiment, only constituting a proof-of-concept, was conducted in a 2D environment specifically designed to have two distinct paths. The goal of this experiment is to show that, in some cases, restarting strategies are expected to find shorter paths than those found by the parallel-OR strategy.

\subsection{Setup}

\subsubsection{Implementation details}

Our restarting simulator was implemented in \texttt{C++} in Linux using the standard system process mechanism. In order to not overwhelm the system with too many child processes running simultaneously. Thus, as a precaution, if the system we used has a CPU with $t$ threads (we had $t=16$ and $t=128$ in our experiments), our simulation used $(3/4)t$. We used the \SBMP algorithm implementations from the open source Parasol Planning Library (\PPL) \cite{p-pplp-25}. For both \RRT and \PRM, default \PPL parameter values were used.

Every single execution of a strategy, i.e., finding a solution for a given motion planning problem, possibly by running many copies of some algorithm \alg, was capped at $3000$ seconds. This failure threshold was selected to allow us to run a significant number of experiments on hard instances (relative to the hardware used). Throughout the tables summarizing the runtime results of the experiments, the number of failed applications of the strategy is highlighted in red. Also, the descriptive statistics in all tables refer only to successful runs.

\subsubsection{Hardware used}
To show the effect of parallelization, three hardware configurations were used. We list and provide notation for each setup below:
\begin{compactenumI}
    \item \itemlab{single}%
    \Single: Virtual Linux machine with an 8-core CPU with 8 threads (1 used), and 32GB memory.

    \medskip%
    \item \itemlab{valis} %
    \Valis: Linux machine with an \texttt{Intel i7-11700 8-core} CPU with 16 threads (12 used), and 64GB memory.  This is a standard desktop computer.%
    \medskip%
    \item \itemlab{cluster} %
    \Cluster: Linux machine with \texttt{AMD EPYC 7713 64-Core} CPU with 128 threads (96 used), and 512GB of memory.
\end{compactenumI}

\subsubsection{List of strategies}

Given an \SBMP algorithm \alg (either \RRT or \PRM) on hardware with \threads {}threads, we compare the following strategies in our experiments:

\begin{compactenumI}
    \medskip
    \item \textsc{Single}: Run a single instance of \alg using one thread.

    \medskip
    \item \textsc{Parallel}: The parallel-OR model. Run \threads independent copies of \alg using \threads threads (in parallel) until one of them succeeds. See \secref{parallel:or} for more details on the parallel-OR model.

    \medskip
    \item \textsc{Fixed-\TTL}: Every thread runs \alg until it either succeeds, or the fixed threshold \TTL is reached, in which case it restarts. See \secref{settings:and:definitions} for more details on the fixed-\TTL strategy.

    \medskip
    \item \textsc{Counter search}: A shared counter can be queried for the next \TTL in the infinite sequence. Each of the \threads{} threads queries the counter for a \TTL, and runs \alg until it either succeeds or the \TTL is reached, in which case it terminates the run and queries the counter for another \TTL. See \secref{counter:search:strategy} for more details on the counter search strategy.

    \medskip%
    \item \textsc{Wide}: A shared counter can be queried for a copy of \alg that needs to be advanced, which could either be a paused process or a new copy. Each of the \threads{} threads queries the counter, and advances the process by a number of seconds equivalent to its current total runtime. New copies of \alg are run for one second. If the process does not successfully terminate during this time, it is paused, and the thread queries the counter again. See \secref{wide:search} for more details on the wide search strategy.

    \medskip%
    \item \textsc{Random $\zeta(2)$}: Every thread draws a random number $t$ from the $\zeta(2)$ distribution, and runs a copy of \alg with \TTL $t$. In practice, this means running \threads independent copies of a random $\zeta(2)$ search. See \secref{r:zeta:2} for more details on the random $\zeta(2)$ distribution strategy.

    \medskip%
    \item \textsc{\RandomCnt}: Same strategy as random $\zeta(2)$, except that the \TTL is drawn from the random counter distributions.  See \secref{r:counter} for more details on the random counter search strategy.

    \medskip%
    \item \textsc{Counter+Cache}: Same as the counter strategy, except that the shared counter responds to queries with a \TTL $t$, and possibly a paused process $P$ that can be resumed to achieve a run of \alg with the target \TTL. The querying thread then either starts a new run of \alg with \TTL $t$, or resumes $P$ until it either succeeds or reaches $t$ seconds of runtime. If the \TTL is reached, the current process is pushed to the cache, a fixed-size priority queue based on the runtime. The cache size is \Threads, where \Threads is the maximum number of threads on the current architecture (so 16 on \Valis, and 128 on \Cluster). See \secref{cache} for more details on the X + cache strategy.

\end{compactenumI}

Note that we do not compare all strategies in every experiment. The subset of strategies used depends on the tested hypothesis. Also, to keep the number of experiments we perform under control, we only use caching in tandem with the counter search strategy, although our experimentation suggests it might be helpful in many cases.

\subsection{Inputs}
\seclab{inputs}

\begin{compactenumI}
    \item \itemlab{simple:passage} \SimplePassage
    (\figref{simple:passage}).
    We use a simple passage environment as a benchmark for high-variance cases, as both the geometries involved and the task are intuitive.
    \begin{itemize}
        \item Environment: A $10\times10\times10$ bounding box split by a wall with a single square passage with side-length 1.7 in its center.

        \item Robot: \DOFX{6} or free flying $2\times 1 \times 1$ rectangular prism.

        \item Task: Move from a starting position on one side of the wall to a configuration on the other side of the wall.

        \item Algorithm: \RRT.

    \end{itemize}

    \begin{figure}[h!]
        \centering
        \includegraphics[width=0.5\linewidth]{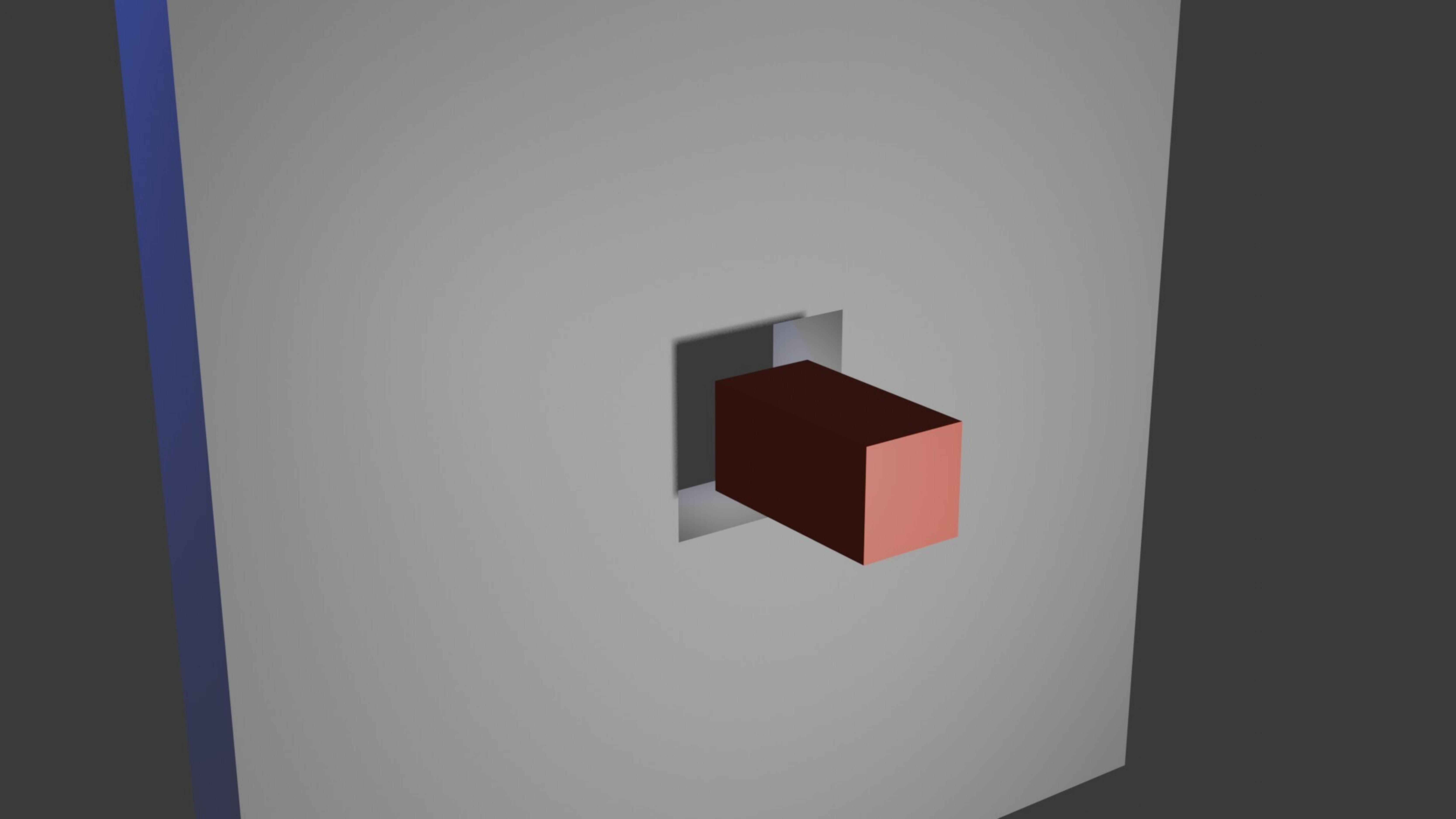}%
        \caption{The \SimplePassageRef input.  An illustration of the experiment described in \secref{simple:passage}. The robot, shown in green, fits through the passage only when properly oriented.}%
        \figlab{simple:passage}
    \end{figure}

    \smallskip%
    \item \itemlab{manipulator}%
    \Manipulator (\figref{shelf}). %
    This environment, simulating a common application for robots in residential and industrial scenarios, was chosen to demonstrate the usefulness of stochastic resetting for fixed-base manipulators.
    \begin{itemize}
        \item Environment: A table with a fixed manipulator at its center, and a double shelf is in front of the manipulator. The top shelf contains 3 objects: a cylindrical object and two rectangular prisms.

        \item Robot: \DOFX{6} manipulator.

        \item Task: Move from a neutral position outside the shelf to a configuration with the robot's end-effector inside the top shelf and its arm between two objects.

        \item Algorithm: \RRT.

    \end{itemize}

    \begin{figure}[h!]
    	\centering%
        \includegraphics[width=0.5\linewidth]{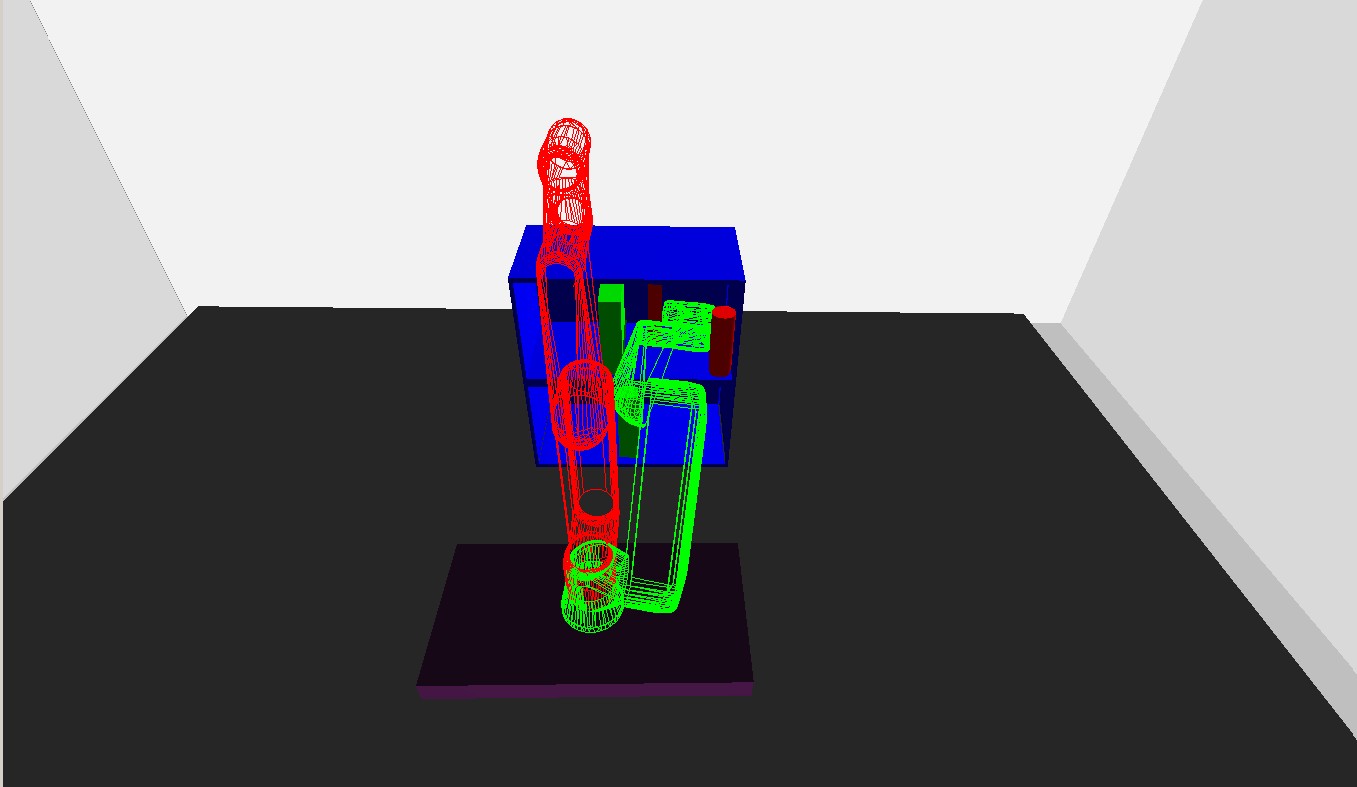}%
    	\caption{The \ManipulatorRef input.  An illustration of the experiment described in \secref{shelf}. The start and goal configurations are colored red and green, respectively.}
    	\figlab{shelf}
    \end{figure}

    \smallskip%
    \item \itemlab{bug:trap}%
    \BugTrap (\figref{bug:trap}). %
    The bug trap environment was chosen to show the usefulness of stochastic resetting for mobile free-flying robots in the presence of a narrow passage that makes the task non-trivial for \SBMP algorithms using uniform sampling.
    \begin{itemize}
        \item Environment: 3D hollow oblong shape with a single thin funnel-shaped entrance with the thin part of the funnel inside the trap.

        \item Robot: \DOFX{6} free-flying elongated cylinder.

        \item Task: Move from a starting position outside the trap to a configuration inside the trap.

        \item Algorithm: \RRT.

        \item Note: Based on the Bug Trap Benchmark%
        \footnote{\urlBugTrap} from the Parasol Lab at the University of Illinois.

    \end{itemize}

    \begin{figure}[h!]
        \centering
        \includegraphics[width=0.5\linewidth]{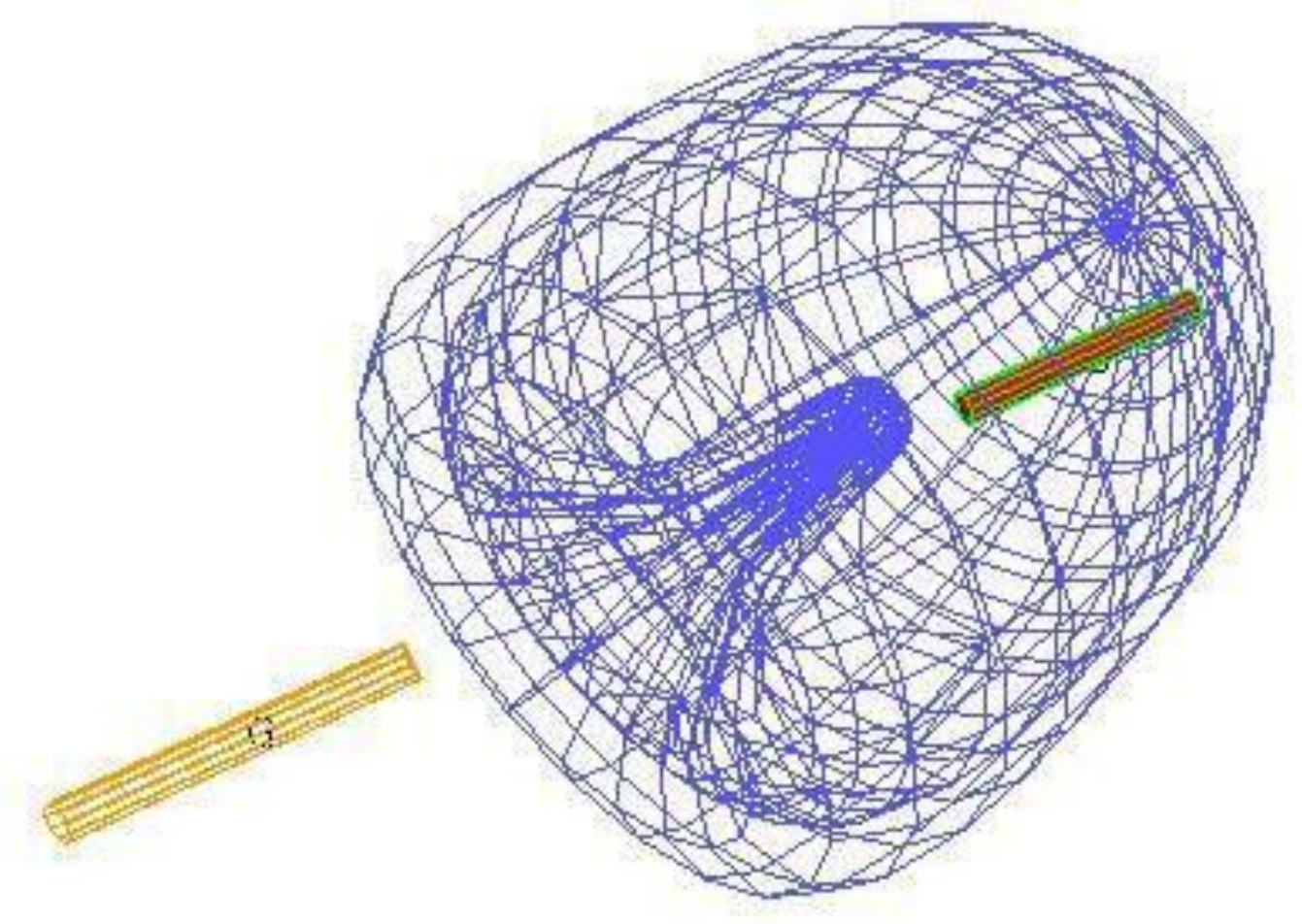}%
        \caption{An illustration of the experiment described in \secref{bug:trap}. The start and goal configurations are colored blue and red, respectively.}
        \figlab{bug:trap}
    \end{figure}

    \smallskip%
    \item \itemlab{maze}%
    \Maze (\figref{maze}).

    We chose to include this 2D environment as a ``dirty laundry'' instance, showing a task with low runtime variability.

    \begin{itemize}
        \item Environment: A 2D maze with two openings (entrance/exit).

        \item Robot: A 2D point robot.

        \item Task: Move through the maze starting at one opening and reaching the other.

        \item Algorithm: \PRM

        \item Source: Based on the \href{\mazeUrl}{Maze} Benchmark\footnote{\url{\mazeUrl}} from the Parasol Lab at the University of Illinois.

    \end{itemize}

    \begin{figure}[h!]
        \centering%
        \includegraphics[width=0.4\linewidth]{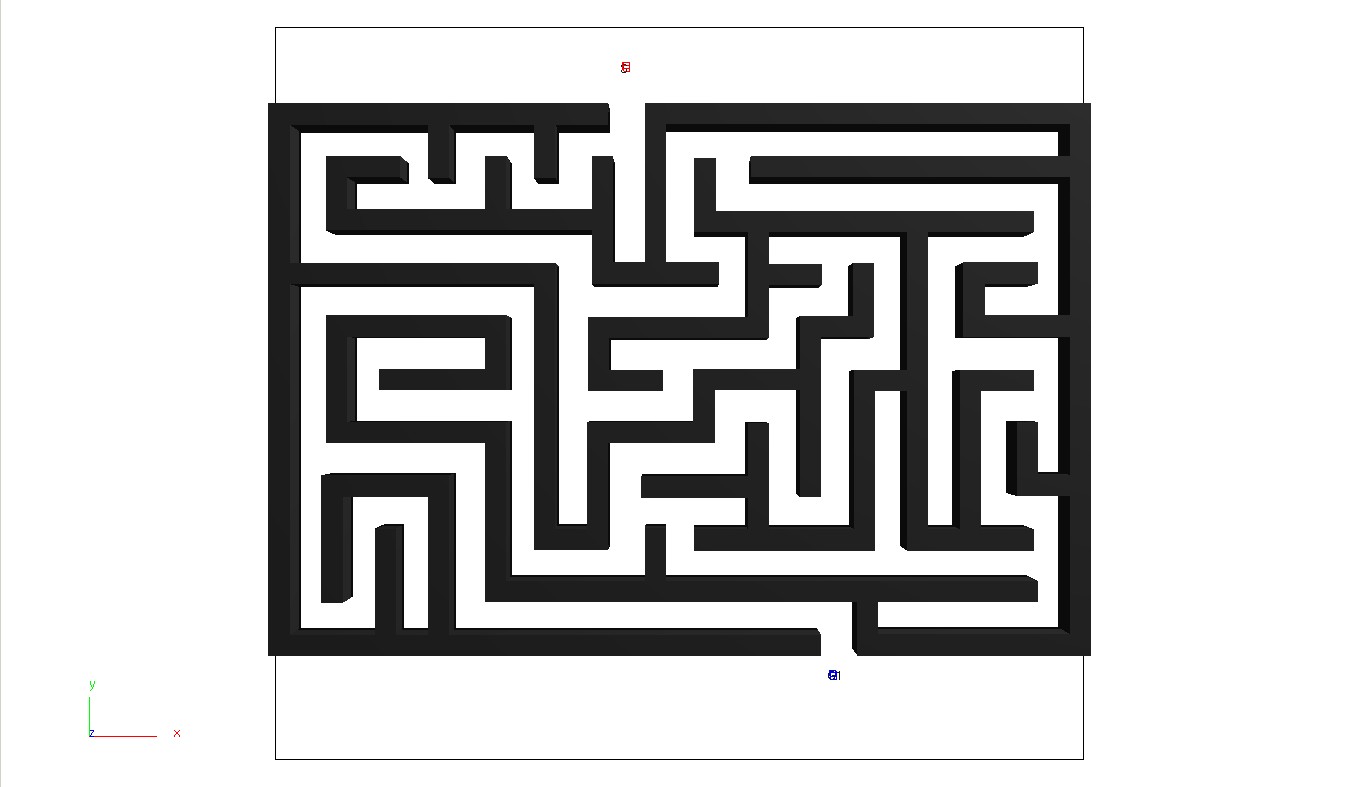}%
        \caption{An illustration of the experiment described in \secref{maze}. The start and goal configurations are colored blue and red, respectively.}
        \figlab{maze}
    \end{figure}

    \smallskip%
    \item \itemlab{long:detour}%
    \LongDetour (\figref{path_length}).
    \begin{itemize}
        \item Environment: A 2D bounding box of size $100\times 20$ with a wall crossing it from left to right. The wall has two passages, a narrow passage close to the left end, and a wider passage between the wall and the right end of the bounding box.

        \item Robot: \DOFX{3} rectangular robot

        \item Task: Move from a starting position on the left side of the environment on one side of the wall, to a configuration on the left side on the other side of the wall.

        \item Algorithm: \RRT

    \end{itemize}
    We used this input to demonstrate that our strategies lead to shorter solutions, see \tblref{path_length}.

    \begin{figure}[h!]
           \centering \subfloat[]{%
              \includegraphics[width=0.45\linewidth]{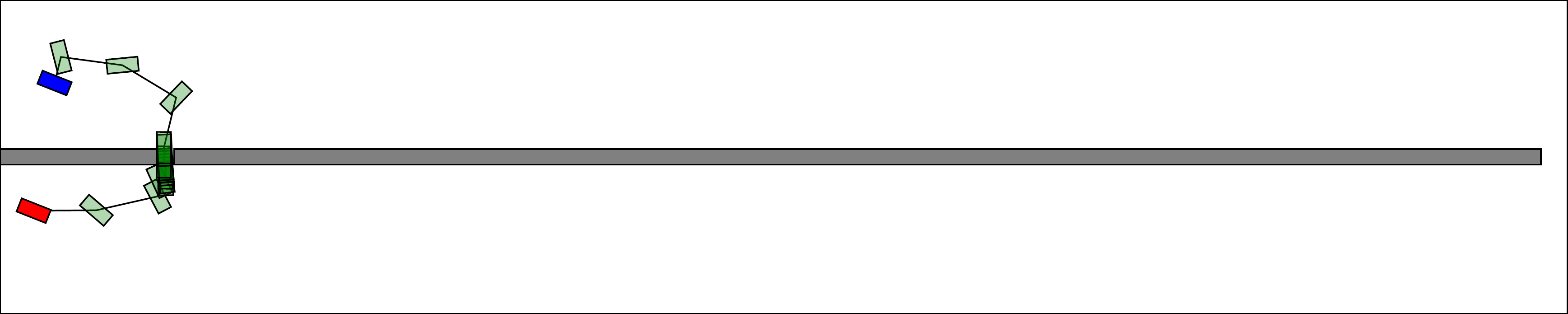}%
           }%
           \hfil%
           \subfloat[]{
              \includegraphics[width=0.45\linewidth]{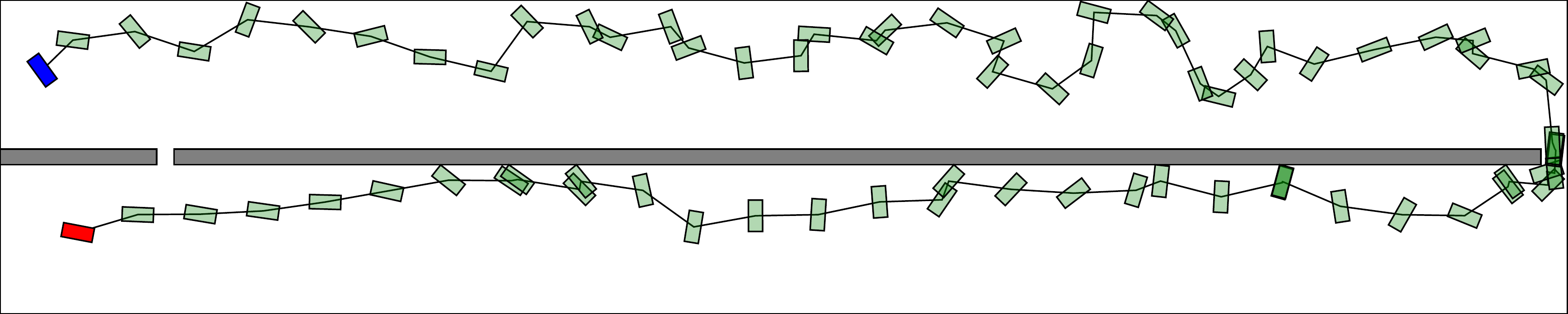}%
           }%
        \caption{An illustration of the experiment described in \secref{path_length}. Two obstacles are shown in gray. Note that the top gap is bigger and thus easier for the motion planner to discover if the exploration reaches this region, while the bottom gap is narrower and thus harder to discover.}
        \figlab{path_length}
    \end{figure}
\end{compactenumI}

\subsection{High variance experiment}

\subsubsection{Basic benchmark}

\seclab{simple:passage}

The first experiment, run on input \SimplePassageRef, serves as a comparison benchmark for the rest of the section. We run all of the strategies described above, as well as a single thread \RRT for comparison.  The simplicity of this input is intended to demonstrate the strategies without other considerations getting in the way, such as the difficulty of the input itself or unexpected behaviors of the motion planner that arise over more complicated inputs.  To that end, we run this experiment on \ValisRef.

This experiment includes a simple narrow passage environment and a basic \DOFX{6} free-flying object, see \SimplePassageRef, which are enough to create a high-variance instance of the motion planning problem, as a ``lucky'' sample with the right orientation may result in an almost immediate solution, but, if some configurations with a ``bad'' orientation are sampled next to the passage's opening, catastrophic runtimes may be encountered.

\begin{table*}[h!]
	\centering
	\begin{tabular}{c}
		% flatex input: [figs/simple_passage_valis.tex]
\begin{tabular}{|l||r|r|r||r|r||r|r|r|}
  \hline
  \textbf{Simulation} & \textbf{\#} & \textbf{Succs} & \textbf{Fails} & \textbf{Mean} & \textbf{Median} & \textbf{Std Dev} & \textbf{Min} & \textbf{Max} \\\hline
  \cellcolor{lightgray}{\texttt{Single}} & \cellcolor{lightgray}{\texttt{100}} & \cellcolor{lightgray}{\texttt{67}} & \cellcolor{lightgray}{\FailX{33}} & \cellcolor{lightgray}{\texttt{840.82}} & \cellcolor{lightgray}{\texttt{637.00}} & \cellcolor{lightgray}{\texttt{770.50}} & \cellcolor{lightgray}{\texttt{1.00}} & \cellcolor{lightgray}{\texttt{2963.00}} \\
  \texttt{Parallel} & \texttt{100} & \texttt{100} & \texttt{0} & \texttt{117.49} & \texttt{26.50} & \texttt{234.22} & \texttt{1.00} & \texttt{1509.00} \\
\hline
  \cellcolor{lightgray}{\texttt{Wide}} & \cellcolor{lightgray}{\texttt{100}} & \cellcolor{lightgray}{\texttt{100}} & \cellcolor{lightgray}{\texttt{0}} & \cellcolor{lightgray}{\texttt{24.93}} & \cellcolor{lightgray}{\texttt{16.00}} & \cellcolor{lightgray}{\texttt{29.83}} & \cellcolor{lightgray}{\texttt{1.00}} & \cellcolor{lightgray}{\texttt{145.00}} \\
  \texttt{Random} & \texttt{100} & \texttt{100} & \texttt{0} & \texttt{20.10} & \texttt{12.50} & \texttt{21.97} & \texttt{1.00} & \texttt{99.00} \\
  \cellcolor{lightgray}{\texttt{Counter+Cache}} & \cellcolor{lightgray}{\texttt{100}} & \cellcolor{lightgray}{\texttt{100}} & \cellcolor{lightgray}{\texttt{0}} & \cellcolor{lightgray}{\texttt{19.76}} & \cellcolor{lightgray}{\texttt{11.50}} & \cellcolor{lightgray}{\texttt{20.64}} & \cellcolor{lightgray}{\texttt{1.00}} & \cellcolor{lightgray}{\texttt{108.00}} \\
  \texttt{Random $\zeta(2)$} & \texttt{100} & \texttt{100} & \texttt{0} & \texttt{18.11} & \texttt{12.00} & \texttt{21.82} & \texttt{1.00} & \texttt{125.00} \\
  \cellcolor{lightgray}{\texttt{Counter}} & \cellcolor{lightgray}{\texttt{100}} & \cellcolor{lightgray}{\texttt{100}} & \cellcolor{lightgray}{\texttt{0}} & \cellcolor{lightgray}{\texttt{14.74}} & \cellcolor{lightgray}{\texttt{11.00}} & \cellcolor{lightgray}{\texttt{16.42}} & \cellcolor{lightgray}{\texttt{1.00}} & \cellcolor{lightgray}{\texttt{110.00}} \\\hline
\end{tabular}

		(a) \SimplePassageRef results on \ValisRef. $\Bigl.$
	\end{tabular}
        \caption{Table summarizing the runtime results of the experiment described in \secref{simple:passage}, the benchmark experiment, on \ValisRef.}
     \tbllab{simple:passage}
\end{table*}

\myparagraph{Results and discussion.} %
The results, summarized in \tblref{simple:passage}, show the super-linear improvement of using additional threads in high-variance motion planning problems. The high variance phenomenon is also apparent in the high failure rate ($33\%$) of the single thread method (i.e., the runtime exceeded 3000 seconds in a single try).

The remarkable advantage of the different restarting strategies over the parallel-OR model is also evident, with the slowest restarting strategy (wide search) giving an average runtime almost $5$ times faster than that of the parallel strategy. The large variance and gap between the min and the max in \tblref{simple:passage} are further evidence of the high variance.

\subsubsection{Fixed base manipulator}
\seclab{shelf}

In this experiment, run on the \ManipulatorRef input, a manipulator resembling a UR5 tasked with reaching into a cluttered shelf serves two purposes. On top of showcasing the advantage of the restarting strategies over the parallel-OR model, it is the only experiment where we demonstrate the effectiveness of the restarting strategies on a ``plain'' single process architecture, and the difference between fixed-\TTL strategies to ours. Recall that the optimal strategy is always a fixed-\TTL strategy. See \obsref{fixed:ttl:optimal}.

Intuitively, this motion planning problem, described below in greater detail, contains a bottleneck in the form of a small set of configurations that must be sampled or found for a valid motion to be generated (known as a narrow passage) as the manipulator must find its way between the objects on the shelf.

\begin{table*}[h!]
    \centering
    \begin{tabular}{c}
      % flatex input: [figs/shelves_catapult.tex]
\begin{tabular}{|l||r|r|r||r|r||r|r|r|r}
  \hline
  \textbf{Simulation} & \textbf{\#} & \textbf{Succs} & \textbf{Fails} & \textbf{Mean} & \textbf{Median} & \textbf{Std Dev} & \textbf{Min} & \textbf{Max} \\\hline
  \cellcolor{lightgray}{\texttt{Single}} & \cellcolor{lightgray}{\texttt{100}} & \cellcolor{lightgray}{\texttt{7}} & \cellcolor{lightgray}{\FailX{93}} & \cellcolor{lightgray}{\texttt{898.14}} & \cellcolor{lightgray}{\texttt{312.00}} & \cellcolor{lightgray}{\texttt{1074.23}} & \cellcolor{lightgray}{\texttt{34.00}} & \cellcolor{lightgray}{\texttt{2956.00}} \\
\hline
  \texttt{Random $\zeta(2)$} & \texttt{100} & \texttt{19} & \FailX{81} & \texttt{1199.63} & \texttt{907.00} & \texttt{1034.27} & \texttt{2.00} & \texttt{2933.00} \\
  \cellcolor{lightgray}{\texttt{Random}} & \cellcolor{lightgray}{\texttt{100}} & \cellcolor{lightgray}{\texttt{30}} & \cellcolor{lightgray}{\FailX{70}} & \cellcolor{lightgray}{\texttt{1535.50}} & \cellcolor{lightgray}{\texttt{1527.00}} & \cellcolor{lightgray}{\texttt{749.81}} & \cellcolor{lightgray}{\texttt{149.00}} & \cellcolor{lightgray}{\texttt{2987.00}} \\
  \texttt{Counter} & \texttt{100} & \texttt{37} & \FailX{63} & \texttt{1208.51} & \texttt{1171.00} & \texttt{814.20} & \texttt{18.00} & \texttt{2768.00} \\
  \cellcolor{lightgray}{\texttt{Wide}} & \cellcolor{lightgray}{\texttt{100}} & \cellcolor{lightgray}{\texttt{40}} & \cellcolor{lightgray}{\FailX{60}} & \cellcolor{lightgray}{\texttt{1422.03}} & \cellcolor{lightgray}{\texttt{1431.50}} & \cellcolor{lightgray}{\texttt{753.52}} & \cellcolor{lightgray}{\texttt{88.00}} & \cellcolor{lightgray}{\texttt{2967.00}} \\
  \texttt{Counter+Cache} & \texttt{100} & \texttt{45} & \FailX{55} & \texttt{1362.18} & \texttt{1219.00} & \texttt{861.28} & \texttt{3.00} & \texttt{2849.00} \\\hline
\end{tabular}

      (a) \ManipulatorRef results on \SingleRef. $\Bigl.$
      \\
      \\
      %
      % flatex input: [figs/shelves_valis.tex]
\begin{tabular}{|l||r|r|r||r|r||r|r|r|r}
  \hline
  \textbf{Simulation} & \textbf{\#} & \textbf{Succs} & \textbf{Fails} & \textbf{Mean} & \textbf{Median} & \textbf{Std Dev} & \textbf{Min} & \textbf{Max} \\\hline
  \cellcolor{lightgray}{\texttt{Parallel}} & \cellcolor{lightgray}{\texttt{42}} & \cellcolor{lightgray}{\texttt{25}} & \cellcolor{lightgray}{\FailX{17}} & \cellcolor{lightgray}{\texttt{733.44}} & \cellcolor{lightgray}{\texttt{458.00}} & \cellcolor{lightgray}{\texttt{632.32}} & \cellcolor{lightgray}{\texttt{46.00}} & \cellcolor{lightgray}{\texttt{2220.00}} \\
  \texttt{Wide} & \texttt{100} & \texttt{99} & \FailX{1} & \texttt{606.43} & \texttt{398.00} & \texttt{657.69} & \texttt{1.00} & \texttt{2972.00} \\
\hline
  \cellcolor{lightgray}{\texttt{Random}} & \cellcolor{lightgray}{\texttt{100}} & \cellcolor{lightgray}{\texttt{100}} & \cellcolor{lightgray}{\texttt{0}} & \cellcolor{lightgray}{\texttt{541.49}} & \cellcolor{lightgray}{\texttt{365.50}} & \cellcolor{lightgray}{\texttt{513.54}} & \cellcolor{lightgray}{\texttt{3.00}} & \cellcolor{lightgray}{\texttt{2283.00}} \\
  \texttt{Random $\zeta(2)$} & \texttt{100} & \texttt{99} & \FailX{1} & \texttt{477.57} & \texttt{308.00} & \texttt{486.30} & \texttt{1.00} & \texttt{1954.00} \\
  \cellcolor{lightgray}{\texttt{Counter}} & \cellcolor{lightgray}{\texttt{100}} & \cellcolor{lightgray}{\texttt{100}} & \cellcolor{lightgray}{\texttt{0}} & \cellcolor{lightgray}{\texttt{462.92}} & \cellcolor{lightgray}{\texttt{268.00}} & \cellcolor{lightgray}{\texttt{549.91}} & \cellcolor{lightgray}{\texttt{3.00}} & \cellcolor{lightgray}{\texttt{2802.00}} \\
  \texttt{Counter+Cache} & \texttt{100} & \texttt{100} & \texttt{0} & \texttt{368.49} & \texttt{284.00} & \texttt{333.42} & \texttt{7.00} & \texttt{2168.00} \\\hline
\end{tabular}

      \\
      (b) \ManipulatorRef results on \ValisRef. $\Bigl.$
      \\
      \\
      %
      % flatex input: [figs/shelves_7713.tex]
\begin{tabular}{|l||r||r|r||r|r|r|}
  \hline
  \textbf{Simulation} & \textbf{\#} & \textbf{Mean} & \textbf{Median} & \textbf{Std Dev} & \textbf{Min} & \textbf{Max} \\\hline
  \cellcolor{lightgray}{\texttt{Parallel}} & \cellcolor{lightgray}{\texttt{100}} & \cellcolor{lightgray}{\texttt{154.97}} & \cellcolor{lightgray}{\texttt{39.50}} & \cellcolor{lightgray}{\texttt{271.33}} & \cellcolor{lightgray}{\texttt{5.00}} & \cellcolor{lightgray}{\texttt{1400.00}} \\
\hline
  \texttt{Wide} & \texttt{100} & \texttt{115.64} & \texttt{61.00} & \texttt{159.18} & \texttt{4.00} & \texttt{1096.00} \\
  \cellcolor{lightgray}{\texttt{Random $\zeta(2)$}} & \cellcolor{lightgray}{\texttt{100}} & \cellcolor{lightgray}{\texttt{64.69}} & \cellcolor{lightgray}{\texttt{41.50}} & \cellcolor{lightgray}{\texttt{68.92}} & \cellcolor{lightgray}{\texttt{5.00}} & \cellcolor{lightgray}{\texttt{315.00}} \\
  \texttt{Counter+Cache} & \texttt{100} & \texttt{64.33} & \texttt{53.00} & \texttt{61.56} & \texttt{4.00} & \texttt{364.00} \\
  \cellcolor{lightgray}{\texttt{Counter}} & \cellcolor{lightgray}{\texttt{100}} & \cellcolor{lightgray}{\texttt{58.42}} & \cellcolor{lightgray}{\texttt{39.00}} & \cellcolor{lightgray}{\texttt{53.90}} & \cellcolor{lightgray}{\texttt{5.00}} & \cellcolor{lightgray}{\texttt{234.00}} \\
  \texttt{Random} & \texttt{100} & \texttt{47.54} & \texttt{27.00} & \texttt{59.79} & \texttt{4.00} & \texttt{442.00} \\
\hline
  \cellcolor{lightgray}{\texttt{\TTL $2$}} & \cellcolor{lightgray}{\texttt{100}} & \cellcolor{lightgray}{\texttt{51.76}} & \cellcolor{lightgray}{\texttt{38.00}} & \cellcolor{lightgray}{\texttt{50.23}} & \cellcolor{lightgray}{\texttt{4.00}} & \cellcolor{lightgray}{\texttt{237.00}} \\
  \texttt{\TTL $3$} & \texttt{100} & \texttt{47.94} & \texttt{27.00} & \texttt{49.57} & \texttt{5.00} & \texttt{200.00} \\
  \cellcolor{lightgray}{\texttt{\TTL $5$}} & \cellcolor{lightgray}{\texttt{100}} & \cellcolor{lightgray}{\texttt{50.51}} & \cellcolor{lightgray}{\texttt{34.00}} & \cellcolor{lightgray}{\texttt{56.48}} & \cellcolor{lightgray}{\texttt{4.00}} & \cellcolor{lightgray}{\texttt{303.00}} \\
  \texttt{\TTL $10$} & \texttt{100} & \texttt{53.13} & \texttt{38.50} & \texttt{46.54} & \texttt{5.00} & \texttt{272.00} \\\hline
\end{tabular}

      \\
      (c) \ManipulatorRef results on \ClusterRef. $\Bigl.$
    \end{tabular}
    \caption{Tables summarizing the runtime results of the experiment
       described in \secref{shelf} on the different platforms
       \SingleRef, \ValisRef, and \ClusterRef.}
    \tbllab{shelf}
\end{table*}

\myparagraph{Results and discussion.} %
The results, summarized in \tblref{shelf}, show the superiority of the different restarting strategies in all scenarios. In \tblref{shelf} (a), we see that a simple run of \RRT trying to solve the shelf motion planning task has a success rate of between $6\% - 36\%$ that of the restarting strategies. The difference between the restarting strategies is also remarkable, with the worst performing strategy, random $\zeta(2)$, achieving a success rate only $\sim 42\%$ that of the top performer, counter + cache. The descriptive statistics other than the success rate are meaningless in this case.

In \tblref{shelf} (b) and (c), we see that as the number of threads increases, the differences between the non-fixed strategies shrink. The mean and median runtimes are relatively similar on both \ValisRef and \ClusterRef machines, and no single non-fixed strategy shows an unambiguous advantage over the others. The difference between the parallel-OR model and these strategies remains quite large. On the \ValisRef machine, we stopped the experiments with the parallel strategy to save time, after it became clear that it would not produce meaningful statistical results due to low success rates, and on the \ClusterRef machine, it maintains \RRT's high variance runtime distribution, and achieves substantially slower runtimes on average.

In \tblref{shelf} (c), we see that fixed-\TTL strategies perform better on this problem as the threshold approaches 3, surpassing the non-fixed strategies. This conforms with the theoretical analyses and calculations we performed by following \lemref{full:k} and using an empirical distribution to approximate the ground truth $\Distrib$ of the runtime of \RRT on this instance.

\subsubsection{Mobile robot}
\seclab{bug:trap}

In this experiment, run on the \BugTrapRef input, we are tasked with moving a free-flying cylindrical object through a narrow passage and into a ``bug trap". The purpose of this experiment is to examine the performance of the restarting strategies on a motion planning instance with a mobile robot.

\begin{table*}[h!]
    \centering
    \begin{tabular}{c}
      %
      % flatex input: [figs/bug_len_2_valis.tex]
\begin{tabular}{|l||r|r|r||r|r||r|r|r|}
  \hline
  \textbf{Simulation} & \textbf{\#} & \textbf{Succs} & \textbf{Fails} & \textbf{Mean} & \textbf{Median} & \textbf{Std Dev} & \textbf{Min} & \textbf{Max} \\\hline
  \cellcolor{lightgray}{\texttt{Parallel}} & \cellcolor{lightgray}{\texttt{61}} & \cellcolor{lightgray}{\texttt{0}} & \cellcolor{lightgray}{\FailX{61}} & \cellcolor{lightgray}{\texttt{---}} & \cellcolor{lightgray}{\texttt{---}} & \cellcolor{lightgray}{\texttt{---}} & \cellcolor{lightgray}{\texttt{---}} & \cellcolor{lightgray}{\texttt{---}} \\
\hline
  \texttt{Counter+Cache} & \texttt{100} & \texttt{57} & \FailX{43} & \texttt{852.95} & \texttt{602.00} & \texttt{763.67} & \texttt{12.00} & \texttt{2994.00} \\
  \cellcolor{lightgray}{\texttt{Counter}} & \cellcolor{lightgray}{\texttt{100}} & \cellcolor{lightgray}{\texttt{61}} & \cellcolor{lightgray}{\FailX{39}} & \cellcolor{lightgray}{\texttt{1148.41}} & \cellcolor{lightgray}{\texttt{966.00}} & \cellcolor{lightgray}{\texttt{933.65}} & \cellcolor{lightgray}{\texttt{11.00}} & \cellcolor{lightgray}{\texttt{2865.00}} \\
  \texttt{Random $\zeta(2)$} & \texttt{100} & \texttt{62} & \FailX{38} & \texttt{1155.65} & \texttt{967.00} & \texttt{920.35} & \texttt{3.00} & \texttt{2946.00} \\\hline
\end{tabular}

      (a) \BugTrapRef results on \ValisRef. $\Bigl.$
      \\
      \\
      %
      % flatex input: [figs/bug_len_2_7713.tex]
\begin{tabular}{|l||r|r|r||r|r||r|r|r|}
  \hline
  \textbf{Simulation} & \textbf{\#} & \textbf{Succs} & \textbf{Fails} & \textbf{Mean} & \textbf{Median} & \textbf{Std Dev} & \textbf{Min} & \textbf{Max} \\\hline
  \cellcolor{lightgray}{\texttt{Parallel}} & \cellcolor{lightgray}{\texttt{100}} & \cellcolor{lightgray}{\texttt{36}} & \cellcolor{lightgray}{\FailX{64}} & \cellcolor{lightgray}{\texttt{1526.69}} & \cellcolor{lightgray}{\texttt{1427.50}} & \cellcolor{lightgray}{\texttt{904.55}} & \cellcolor{lightgray}{\texttt{6.00}} & \cellcolor{lightgray}{\texttt{2925.00}} \\
\hline
  \texttt{Wide} & \texttt{100} & \texttt{68} & \FailX{32} & \texttt{787.65} & \texttt{503.00} & \texttt{826.54} & \texttt{5.00} & \texttt{2899.00} \\
  \cellcolor{lightgray}{\texttt{Random}} & \cellcolor{lightgray}{\texttt{100}} & \cellcolor{lightgray}{\texttt{100}} & \cellcolor{lightgray}{\texttt{0}} & \cellcolor{lightgray}{\texttt{446.01}} & \cellcolor{lightgray}{\texttt{209.50}} & \cellcolor{lightgray}{\texttt{543.65}} & \cellcolor{lightgray}{\texttt{6.00}} & \cellcolor{lightgray}{\texttt{2519.00}} \\
  \texttt{Counter+Cache} & \texttt{100} & \texttt{100} & \texttt{0} & \texttt{346.37} & \texttt{223.50} & \texttt{349.84} & \texttt{5.00} & \texttt{2084.00} \\
  \cellcolor{lightgray}{\texttt{Counter}} & \cellcolor{lightgray}{\texttt{100}} & \cellcolor{lightgray}{\texttt{100}} & \cellcolor{lightgray}{\texttt{0}} & \cellcolor{lightgray}{\texttt{315.14}} & \cellcolor{lightgray}{\texttt{190.50}} & \cellcolor{lightgray}{\texttt{325.23}} & \cellcolor{lightgray}{\texttt{5.00}} & \cellcolor{lightgray}{\texttt{1755.00}} \\
  \texttt{Random $\zeta(2)$} & \texttt{100} & \texttt{100} & \texttt{0} & \texttt{275.54} & \texttt{155.50} & \texttt{303.83} & \texttt{5.00} & \texttt{1392.00} \\\hline
\end{tabular}

      \\
      (b) \BugTrapRef results on \ClusterRef. $\Bigl.$
    \end{tabular}
    \caption{Tables summarizing the runtime results of the experiment described in \secref{bug:trap} on the different platforms \ValisRef and \ClusterRef.}
    \tbllab{bug:trap}
\end{table*}

\myparagraph{Results and discussion.} %
\tblref{bug:trap}, summarizing the results of this experiment, shows again the clear advantage of all restarting strategies over the parallel-OR model, whose runs on \ValisRef we, again, stopped after it became clear that the low success rate would render the statistical results insignificant. Wide search stands out in this experiment, with relatively poor results. We hypothesize that due to the difficulty of the problem (for \RRT), the number of paused processes quickly became too large for the machine, even \ClusterRef, to efficiently handle, thus leading to poor results.  In particular, the hardware of \ClusterRef seems to be unfriendly to the multitude of processes that the wide search creates, and seems to underperform for this strategy. This phenomenon is also evident in our other experiments.

\subsection{Low variance experiment}
\seclab{maze}

In this experiment, run on the \MazeRef input, we have a motion planning with a stable runtime distribution for \SBMP algorithms. Also, the probability of finding a solution within a few seconds is essentially zero, meaning that the optimal \TTL is high relatively to previous experiments.

This is the only experiment in which we use \PRM, as it is more suitable for this problem, and enabled us to run a full set of experiments much quicker. The purpose of this experiment is to emphasize that the restarting strategies' advantage stems directly from the high variance of the problem, completing our presentation of the general framework.

\begin{table*}[h!]
	\centering
	\begin{tabular}{c}
          %
          % flatex input: [figs/maze_valis.tex]
\begin{tabular}{|l||r||r|r||r|r|r|}
  \hline
  \textbf{Simulation} & \textbf{\#} & \textbf{Mean} & \textbf{Median} & \textbf{Std Dev} & \textbf{Min} & \textbf{Max} \\\hline
  \cellcolor{lightgray}{\texttt{Parallel}} & \cellcolor{lightgray}{\texttt{100}} & \cellcolor{lightgray}{\texttt{124.77}} & \cellcolor{lightgray}{\texttt{124.00}} & \cellcolor{lightgray}{\texttt{19.82}} & \cellcolor{lightgray}{\texttt{85.00}} & \cellcolor{lightgray}{\texttt{183.00}} \\
\hline
  \texttt{Counter} & \texttt{100} & \texttt{318.64} & \texttt{324.00} & \texttt{82.94} & \texttt{173.00} & \texttt{562.00} \\
  \cellcolor{lightgray}{\texttt{Random $\zeta(2)$}} & \cellcolor{lightgray}{\texttt{100}} & \cellcolor{lightgray}{\texttt{247.45}} & \cellcolor{lightgray}{\texttt{221.00}} & \cellcolor{lightgray}{\texttt{113.12}} & \cellcolor{lightgray}{\texttt{86.00}} & \cellcolor{lightgray}{\texttt{901.00}} \\
  \texttt{Counter+Cache} & \texttt{100} & \texttt{220.96} & \texttt{225.00} & \texttt{48.79} & \texttt{120.00} & \texttt{325.00} \\
  \cellcolor{lightgray}{\texttt{Random}} & \cellcolor{lightgray}{\texttt{100}} & \cellcolor{lightgray}{\texttt{214.43}} & \cellcolor{lightgray}{\texttt{194.50}} & \cellcolor{lightgray}{\texttt{93.33}} & \cellcolor{lightgray}{\texttt{95.00}} & \cellcolor{lightgray}{\texttt{542.00}} \\
  \texttt{Wide} & \texttt{100} & \texttt{165.01} & \texttt{157.00} & \texttt{36.40} & \texttt{96.00} & \texttt{286.00} \\\hline
\end{tabular}

          (a) \MazeRef results on \ValisRef. $\Bigl.$
		\\
		\\
          %
          % flatex input: [figs/maze_7713.tex]
\begin{tabular}{|l||r||r|r||r|r|r|}
  \hline
  \textbf{Simulation} & \textbf{\#} & \textbf{Mean} & \textbf{Median} & \textbf{Std Dev} & \textbf{Min} & \textbf{Max} \\\hline
  \cellcolor{lightgray}{\texttt{Parallel}} & \cellcolor{lightgray}{\texttt{100}} & \cellcolor{lightgray}{\texttt{98.23}} & \cellcolor{lightgray}{\texttt{101.00}} & \cellcolor{lightgray}{\texttt{8.68}} & \cellcolor{lightgray}{\texttt{76.00}} & \cellcolor{lightgray}{\texttt{114.00}} \\
\hline
  \texttt{Wide} & \texttt{100} & \texttt{150.18} & \texttt{150.00} & \texttt{17.42} & \texttt{105.00} & \texttt{193.00} \\
  \cellcolor{lightgray}{\texttt{Counter}} & \cellcolor{lightgray}{\texttt{100}} & \cellcolor{lightgray}{\texttt{145.18}} & \cellcolor{lightgray}{\texttt{143.50}} & \cellcolor{lightgray}{\texttt{21.36}} & \cellcolor{lightgray}{\texttt{102.00}} & \cellcolor{lightgray}{\texttt{191.00}} \\
  \texttt{Random $\zeta(2)$} & \texttt{100} & \texttt{137.82} & \texttt{135.50} & \texttt{26.55} & \texttt{86.00} & \texttt{227.00} \\
  \cellcolor{lightgray}{\texttt{Counter+Cache}} & \cellcolor{lightgray}{\texttt{100}} & \cellcolor{lightgray}{\texttt{135.66}} & \cellcolor{lightgray}{\texttt{132.00}} & \cellcolor{lightgray}{\texttt{20.59}} & \cellcolor{lightgray}{\texttt{95.00}} & \cellcolor{lightgray}{\texttt{211.00}} \\\hline
\end{tabular}

          \\
          (b) \MazeRef results on \ClusterRef. $\Bigl.$
	\end{tabular}
        \caption{Tables summarizing the runtime results of the experiment described in \secref{maze} on the different platforms \ValisRef and \ClusterRef.}
\tbllab{maze}
\end{table*}

\myparagraph{Results and discussion.} %
The results, summarized in \tblref{maze}, show that, as expected, the parallel-OR method outperforms all of the restarting strategies. This fits the theoretical analysis, as the parallel strategy does not waste time on short runs of \PRM, and its runs reach more advanced stages of \PRM's run in which the success probability is meaningful.

On \ValisRef we also see that the high optimal \TTL favors the random strategies random counter and random $\zeta(2)$ over the deterministic counter search, since relatively higher \TTL{}s may appear earlier in the execution of the strategy. Also, counter + cache achieves relatively good results, as it conserves computation towards the later stages of the execution. These effects can also be seen on \ClusterRef, but to a lesser extent. Wide search's disappointing performance on \ClusterRef, mentioned above, is also evident here.

\subsection{Path length experiment}
\seclab{path_length}

This last experiment, run on the \LongDetourpRef input, includes a somewhat contrived environment, designed especially to contain two paths, one that is difficult to find using uniform sampling, but that results in a short path, and one that can be found given a reasonable amount of time, but results in a long path. Our goal here is to show that under such circumstances, i.e., when a short path that's hard to detect exists, the restarting strategies yield (in expectation) better solutions than the parallel-OR strategy.

In this experiment, we measure and report only the \textbf{lengths} of the paths returned by the algorithm. All of the numbers in \tblref{path_length} refer only to that quantity.

\begin{table*}[h!]
    \centering
    % flatex input: [figs/path_lengths.tex]
\begin{tabular}{|l||r||r|r||r|r|r|}
  \cline{3-7}
  \multicolumn{2}{c||}{}
  &\multicolumn{5}{c|}{Path lengths}
  \\
  \hline
  \textbf{Simulation} & \textbf{\#} & \textbf{Mean} & \textbf{Median} & \textbf{Std Dev} & \textbf{Min} & \textbf{Max} \\\hline
  \cellcolor{lightgray}{\texttt{Parallel}} & \cellcolor{lightgray}{\texttt{100}} & \cellcolor{lightgray}{\texttt{208.30}} & \cellcolor{lightgray}{\texttt{232.12}} & \cellcolor{lightgray}{\texttt{68.69}} & \cellcolor{lightgray}{\texttt{23.78}} & \cellcolor{lightgray}{\texttt{258.97}} \\
  \texttt{Random} & \texttt{100} & \texttt{148.53} & \texttt{226.89} & \texttt{100.20} & \texttt{22.47} & \texttt{249.52} \\
  \cellcolor{lightgray}{\texttt{Random $\zeta(2)$}} & \cellcolor{lightgray}{\texttt{100}} & \cellcolor{lightgray}{\texttt{148.18}} & \cellcolor{lightgray}{\texttt{226.34}} & \cellcolor{lightgray}{\texttt{99.37}} & \cellcolor{lightgray}{\texttt{24.50}} & \cellcolor{lightgray}{\texttt{255.39}} \\
  \texttt{Counter} & \texttt{100} & \texttt{127.58} & \texttt{45.71} & \texttt{100.08} & \texttt{21.63} & \texttt{249.36} \\\hline
\end{tabular}

    \medskip%
    \caption{\LongDetourpRef results: The results of the experiment described in \secref{path_length}.  Unlike the other tables, all the numbers are \textbf{lengths} of paths, not running times.  }
    \tbllab{path_length}%
\end{table*}

\myparagraph{Results and discussion.} %
In the results, summarized in \tblref{path_length}, we see, by the mean value of path lengths, that all restarting strategies found the shorter path with higher frequency than the parallel-OR model. The counter search strategy, as reflected in its median path length value, has found the shorter path more often than not.

\section{How to use our implementation (for \SBMP)}
\seclab{applying:in:sbmp}

Stochastic restarting strategies are practical and impactful tools that can enhance the performance of \SBMP algorithms. Below, we provide recommendations for applying these methods.

\myparagraph{How to use.} %
We provide an open source implementation of the framework \cite{ah-csula-25-real}. We emphasize that converting an existing program $P$ to use the framework is quite easy. One needs to create a simple wrapper script that: \smallskip%
\begin{compactenumi}
    \item copies the input to a specified directory,

    \item run $P$ on the copied input (storing any intermediate files in this directory), and

    \item if the run succeeds, the script creates a text file (in a specified place) indicating the success.
\end{compactenumi}
\smallskip%
Thus, using our framework can be implemented in less than 20 minutes.  In particular, it might be useful as a tool for harnessing parallelism in a multi-core machine, in a situation where the existing implementation is single-threaded.

\myparagraph{When to use.} %
Generally speaking, the speedup techniques shine when the descriptive complexity of the solution is ``small'' -- the optimal solution is a simple motion involving few parts. Conversely, the speedup technique fails miserably if the motion is complicated (such as in the \MazeRef input, see \tblref{maze}). If using $N$ threads, using $N-1$ with restarting strategies and the remaining thread with ``plain'' \RRT avoids catastrophic runtimes, while taking into account the possibility of a low-variance instance.

\section{Conclusions}
\seclab{conclusions}

In this paper, we have demonstrated the applicability of the stochastic resetting framework to \SBMP algorithms. We have provided a comprehensive introduction to the theory and intuition behind it, and experimentally verified its effectiveness in motion planning instances that give rise to high-variance runtime distributions. We have also presented several novel restarting strategies that are asymptotically optimal and easy to parallelize.

Our results suggest that, when possible, parallel implementations of \SBMP algorithms should invest computational resources into processes running both plain implementations of the algorithm as well as restarting strategies, greatly reducing the probability and severity of catastrophic runtimes, while maintaining progress towards a solution.

\myparagraph{Future research.}
A natural direction is to try and break complicated motion planning problems into short chunks where our techniques help, and combine the solutions to get a global complicated solution faster.

\printbibliography

\end{document}